  \let\oldparagraph\paragraph
  \renewcommand{\paragraph}{
    \@ifstar
      \xxxParagraphStar
      \xxxParagraphNoStar
  }
  \newcommand{\xxxParagraphStar}[1]{\oldparagraph*{#1}\mbox{}}
  \newcommand{\xxxParagraphNoStar}[1]{\oldparagraph{#1}\mbox{}}
  \let\oldsubparagraph\subparagraph
  \renewcommand{\subparagraph}{
    \@ifstar
      \xxxSubParagraphStar
      \xxxSubParagraphNoStar
  }
  \newcommand{\xxxSubParagraphStar}[1]{\oldsubparagraph*{#1}\mbox{}}
  \newcommand{\xxxSubParagraphNoStar}[1]{\oldsubparagraph{#1}\mbox{}}
\newtheorem{theorem}{Theorem}
\newtheorem{lemma}{Lemma}
\newtheorem{assumption}{Assumption}
\patchcmd\longtable{\par}{\if@noskipsec\mbox{}\fi\par}{}{}
\def\maxwidth{\ifdim\Gin@nat@width>\linewidth\linewidth\else\Gin@nat@width\fi}
\def\maxheight{\ifdim\Gin@nat@height>\textheight\textheight\else\Gin@nat@height\fi}
\def\fps@figure{htbp}
  \renewcommand*\contentsname{Table of contents}
  \newcommand\contentsname{Table of contents}
  \renewcommand*\listfigurename{List of Figures}
  \newcommand\listfigurename{List of Figures}
  \renewcommand*\listtablename{List of Tables}
  \newcommand\listtablename{List of Tables}
  \renewcommand*\figurename{Figure}
  \newcommand\figurename{Figure}
  \renewcommand*\tablename{Table}
  \newcommand\tablename{Table}
\definecolor{myblue}{HTML}{67A9CF}
\definecolor{darkgreen}{rgb}{0.0, 0.3, 0.0}
\newcommand{\anon}{1}
\begin{document}

\def\spacingset#1{\renewcommand{\baselinestretch}%
{#1}\small\normalsize} \spacingset{1}


\if1\anon
{
  \title{\bf Detecting LLM-Generated Text with Performance Guarantees}
  \author{Hongyi Zhou\textsuperscript{1*}, Jin Zhu\thanks{The first two authors contributed equally to this paper and are listed in alphabetical order. \\ $^\dag$Co-corresponding authors.}\;\textsuperscript{2$\dag$}, Ying Yang\textsuperscript{1}, Chengchun Shi\textsuperscript{3$\dag$}\\ \smallskip \\
    {$^1$Tsinghua University, $^2$University of Birmingham} $\&$ \\
    {$^3$London School of Economics and Political Science}\\
    }
\date{}
  \maketitle
} \fi

\if0\anon
{
  \bigskip
  \bigskip
  \bigskip
  \begin{center}
    {\LARGE\bf {\bf Detecting LLM-Generated Text with Performance Guarantees}}
\end{center}
  \medskip
} \fi

\bigskip
\begin{abstract}
Large language models (LLMs) such as GPT, Claude, Gemini, and Grok have been deeply integrated into our daily life. They now support a wide range of tasks -- from dialogue and email drafting to assisting with teaching and coding, serving as search engines, and much more. However, their ability to produce highly human-like text raises serious concerns, including the spread of fake news, the generation of misleading governmental reports, and academic misconduct. To address this practical problem, we train a classifier to determine whether a piece of text is authored by an LLM or a human. Our detector is deployed on an online CPU-based platform  \url{https://huggingface.co/spaces/stats-powered-ai/StatDetectLLM}, and contains three novelties over existing detectors: (i) it does not rely on auxiliary information, such as watermarks or knowledge of the specific LLM used to generate the text; (ii) it more effectively distinguishes between human- and LLM-authored text; and (iii) it enables statistical inference, which is largely absent in the current literature. Empirically, our classifier achieves higher classification accuracy compared to existing detectors, while maintaining type-I error control, high statistical power, and computational efficiency. 
\end{abstract}

\noindent%
{\it Keywords:}  Large language models, Machine-generated text detection, Classification, Statistical inference. 
\vfill

\newpage
\spacingset{1.8} 

\section{Introduction}\label{sec-intro}
The past few years have witnessed the rapid development of general-purpose large language models (LLMs) such as GPT \citep{hurst2024gpt}, DeepSeek \citep{liu2024deepseek}, Claude \citep{anthropic2024claude}, Gemini \citep{comanici2025gemini}, Grok \citep{xAI2023Grok} and Qwen \citep{yang2025qwen3}. These models have demonstrated remarkable performance across a wide range of tasks, from conventional question answering, summarization, translation to  reasoning and code generation. They are now deeply integrated into various application domains, including finance, education, healthcare, software engineering and journalism \citep{arora2023promise, chan2023students, hou2024large,liu2025fin}. 

Due to their ability to generate highly coherent, human-like text, these LLMs also pose serious societal and ethical challenges related to authorship attribution, academic integrity, intellectual property, and the spread of misinformation. 
For instance, in academics, authors may present LLM-generated ideas or writing as their own without giving proper credit. Similarly, reviewers may rely on LLM-generated reviews without thoroughly reading the paper, resulting in low-quality and unconstructive reviews. A recent study reports that at least 15.8\% of reviews submitted to the 2024 ICLR conference – one of the most prestigious machine learning conferences – were AI-assisted \citep{latona2024ai}. Meanwhile, at least 13.5\% of PubMed-indexed papers published in 2024 had abstracts that were processed with LLMs \citep{kobak2025delving}. Across social media platforms, LLMs can readily amplify the spread of disinformation at scale \citep{weidinger2021ethical}. Finally, policymakers and governments increasingly highlight the need to safeguard generative AI technologies to ensure they remain responsible and trustworthy \citep{oecd2024governingAI}. Consequently, it has become an urgent priority to develop reliable algorithms capable of distinguishing between human- and LLM-authored text \citep{crothers2023machine, wu2025survey}. 

\subsection{Related works}\label{sec:related}
\begin{figure}[t]
    \centering
    \includegraphics[width=0.85\linewidth]{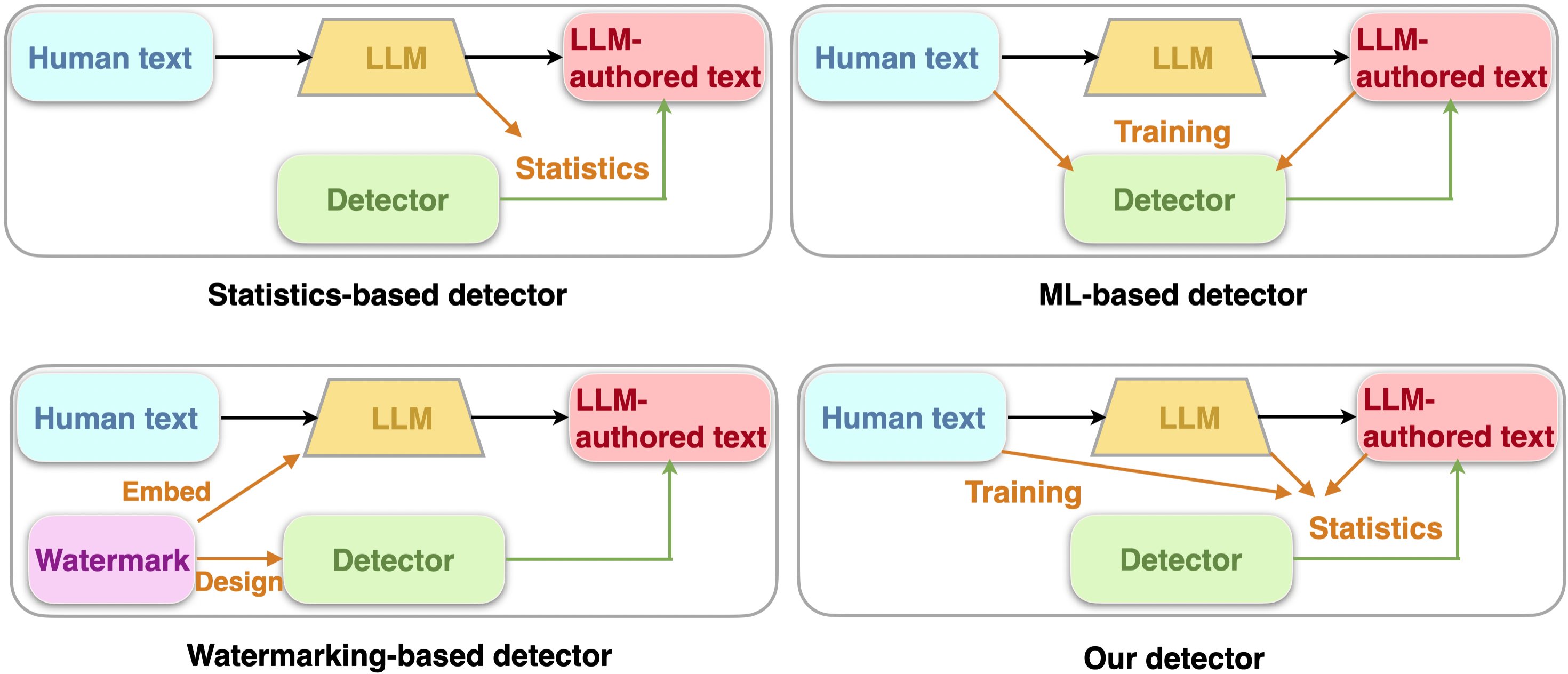}
    \caption{Visualizations of three types of existing detectors (statistics-based, ML-based, watermarking-based) along with our proposed detector.}\label{fig:relatedwork}
\end{figure}
Despite the recent emergence of LLMs, there is already a large body of research on detecting LLM-generated text. Broadly speaking, existing approaches fall into three categories: watermarking-based, machine learning (ML)-based, and statistics-based. We review each category below and provide a visualization of these approaches in Figure \ref{fig:relatedwork}. 

\begin{enumerate}[leftmargin=*]
\item \textbf{Watermarking-based} approaches insert subtle signals, known as watermarks, into a target LLM's output text (see the bottom left panel of Figure \ref{fig:relatedwork}). Text generated by this LLM can then be identified by testing the presence of these signals. An early example of this technique is Gumbel watermark \citep{aaronson2023watermarking}. More recently, a line of work in the statistics literature has framed watermark detection as a statistical hypothesis testing problem \citep[see, e.g.,][]{li2025robust,li2025statistical,xie2025debias}. Within this framework, the null hypothesis assumes that no watermark is present in the input text, and rejecting it provides statistical evidence that the text is generated by the target LLM. However, these approaches rely on knowledge of the specific hash function or random number generator (RNG) used during the target LLM's token generation, which is often not publicly available. 

\item \textbf{Statistics-based} approaches extract information from the target LLM to construct a test statistic, which is then used to distinguish between human- and machine-authored text (see the upper left panel of Figure \ref{fig:relatedwork}). As a simple example, suppose certain words are more likely to appear in LLM-generated text than in human-written text. Then the frequency of these words in the input text can serve as the test statistic: if the frequency is sufficiently high, we conclude that the text is  generated by the LLM. A variety of statistical measures have been proposed in the literature. Among these, the most commonly used are based on the logits of the target LLM’s next-token prediction distribution \citep{mitchell2023detectgpt, su2023detectllm, bao2024fastdetectgpt, hans2024spotting, zhou2025adadetectgpt}; see Section \ref{sec:preliminary} for further details of these logits-based methods. Other statistics include the input text’s N-gram distribution \citep{solaiman2019release, yang2024dnagpt}, its intrinsic dimensionality \citep{tulchinskii2023intrinsic},  the distribution of
absolute ranks of tokens probabilities across the input text \citep{gehrmann2019gltr}, the reward model used by the target LLM \citep{lee2024remodetect}, and the maximum mean discrepancy \citep{zhang2024detecting, song2025deep}. However, these approaches focus solely on classification and do not study statistical inference.

\item \textbf{ML-based} approaches leverage large human-written corpora available on the Internet, prompt the target LLM to generate the corresponding LLM-authored text, and then train classification models on both types of text for detection (see the upper right panel of Figure \ref{fig:relatedwork}). These methods can be further categorized into three types based on the classification models used: (i) classical ML models (e.g., decision trees, support vector machines); (ii) LLMs; and (iii) hybrids of (i) and (ii). Specifically, the first type extracts certain features from the input text and feed them into a classification model. For instance, the statistical measures described in statistics-based methods can serve as such features. Other features include the classical term frequency–inverse document frequency, unigram and bigram \citep{solaiman2019release}, the cross-entropy loss computed between the input text and the target LLM \citep{guo2024biscope}, and the semantic difference measure between the original input text and its LLM-rewritten version \citep{mao2024raidar}. The second type of methods uses LLMs directly as classification models. Examples of language models employed for this purpose include RoBERTa \citep{solaiman2019release, guo2023close}, BERT \citep{ippolito2020bert}, and DistilBERT \citep{mitrovic2023chatgpt}. This approach is well justified, as LLMs are inherently designed to process text, and the resulting model parameters can be fine-tuned on the dataset. The last type of methods uses the outputs of fine-tuned LLMs as input features for classical ML-based classification \citep{abburi2023generative}. Similar to statistics-based approaches, statistical inference is not considered in these methods.
\end{enumerate}
To summarize, all existing methods suffer from certain limitations. Watermarking-based approaches require knowledge of the hash function or RNG used by the target LLM, but they can conduct statistical inference to produce a $p$-value for uncertainty quantification. In contrast, most statistics- and ML-based methods do not require access to the hash function or RNG. But they typically cannot perform statistical inference, one exception being \citet{zhu2025reliably}. Finally, all these methods are model-specific: they are designed to detect text generated by a particular target LLM and do not generalize to other models. 

\subsection{Contribution}
This paper proposes a detector designed to overcome the limitations of existing methods. Our main contributions are listed below: 
\begin{enumerate}[leftmargin=*]
    \item We collect a dataset of over 10,000 texts, covering a range of domains from medical to legal documents, consisting of texts written from both humans and recent popular LLMs such as the GPT series, Grok, and Gemini. This dataset can be used to train classifiers for detecting LLM-generated text across different domains.  
    \item We train a detector using the collected dataset that is target LLM-agnostic and does not require access to the model's hash function or RNG. It achieves superior classification performance compared to existing statistics- and ML-based detectors, while also controlling type-I error and maintaining high power similar to watermarking-based methods.
    \item We deploy a publicly available website (see Figure~\ref{fig:website}) to host our detector, helping users detect LLM-generated text without downloading the model or retraining the detector on our data.
\end{enumerate}

\begin{figure}[t]
    \centering
    \includegraphics[width=1.0\linewidth]{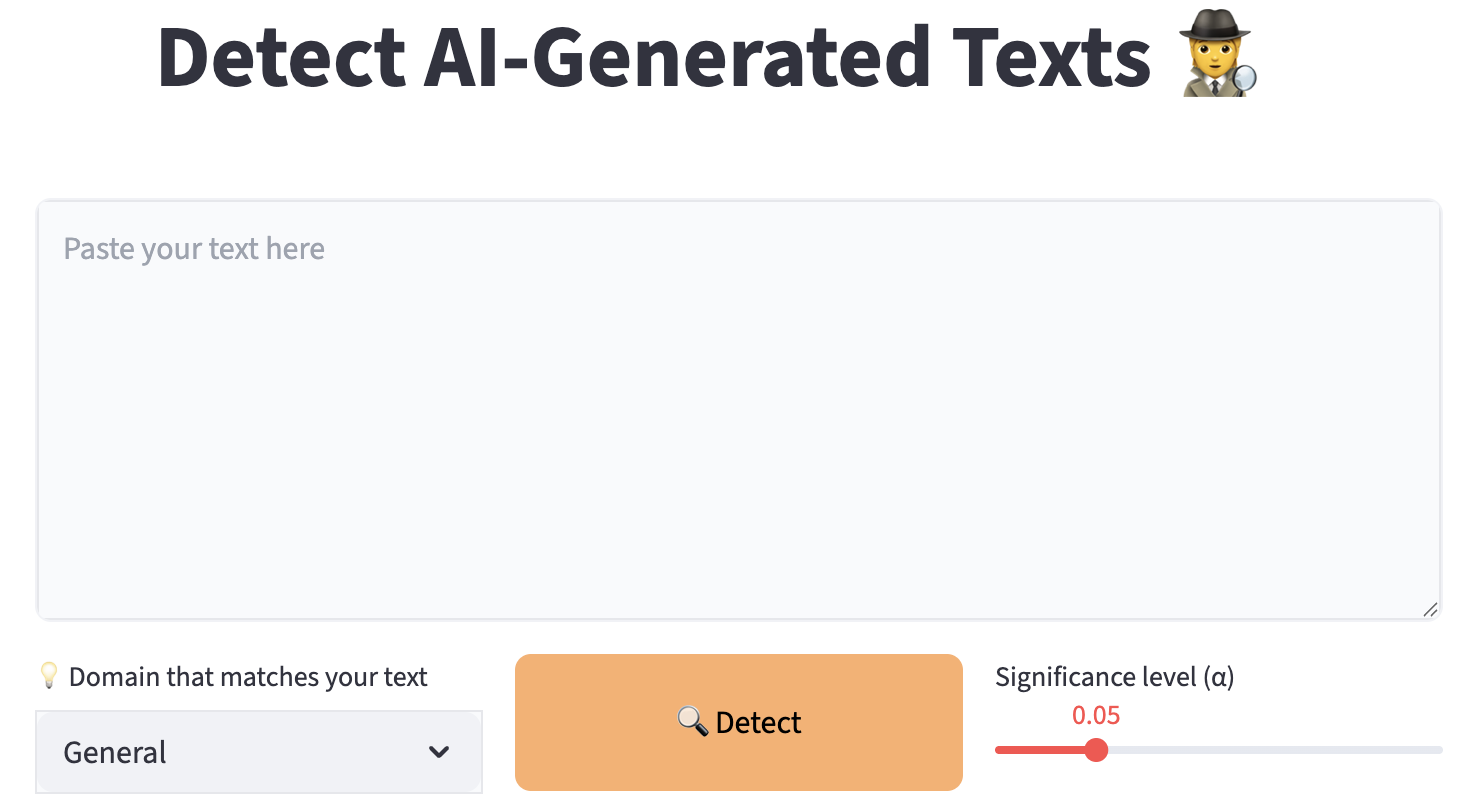}
    \caption{Website interface for our detector: users can enter text in the shaded area labeled “Paste your text here,” select the domain of the text (e.g., finance, law; by default, general), choose a significance lever $\alpha$, and click “Detect.” The detector then produce an output (see Figure~\ref{fig:website-bottom}). If not specified, the significance level is set to $0.05$.}\label{fig:website}
\end{figure}

\subsection{Paper organization}
The rest of the paper is organized as follows. Section~\ref{sec:preliminary} reviews the problem setup and introduces logits-based detectors, which are closely related to our approach. Section~\ref{sec:data} presents the dataset we collect for training the detector. Section~\ref{sec:method} describes our training methodology. Section~\ref{sec:experiments} reports numerical results evaluating our method on both in-distribution and out-of-distribution data. Section~\ref{sec:website} introduces our website. Finally, Section~\ref{sec:conclusion} concludes the paper and discusses directions for future research. 

\section{Preliminaries}\label{sec:preliminary}
\subsection{Problem setup}
In this section, we introduce the notation and formally define the problem. In natural language processing, a token is a basic unit of text, such as a word, subword, character, or punctuation mark. We refer to the set of all such tokens as a vocabulary, denoted by $\mathcal{V}$. A text passage can then be represented as a sequence of tokens, denoted by $\bm{X} = (X_1, \ldots, X_T)$, where $X_t \in \mathcal{V}$ denotes the $t$th token in the sequence, and $T$ represents the total number of tokens in the passage. Without loss of generality, we assume that all passages are of length $T$. This can be achieved by setting $T$ to the maximum length and padding shorter passages with zeros to reach the required length. 

Let $\mathbb{P}$ and $\mathbb{Q}$ represent the distribution functions over human-written and LLM-generated token sequences, respectively. Since each text $\bm{X}$ is a time series, both $\mathbb{P}$ and $\mathbb{Q}$ can be decomposed into sequences of conditional probability distributions $\{p_t\}_{t=1}^T$ and $\{q_t\}_{t=1}^T$, where each $p_t$ (and $q_t$) denotes the conditional distribution of $X_t$ given all preceding tokens, denoted by $\bm{X}_{<t}= (X_1, \cdots, X_{t-1})$, i.e., 
\begin{equation*}
    p_t(x_t|\bm{x}_{<t})=\mathbb{P}(X_t=x_t|\bm{X}_{<t}=\bm{x}_{<t}),\,\, q_t(x_t|\bm{x}_{<t})=\mathbb{Q}(X_t=x_t|\bm{X}_{<t}=\bm{x}_{<t}),
\end{equation*}
for any $x_t\in\mathcal{V}$, $\bm{x}_{<t}\in \mathbb{V}^{t-1}$.

Notice that neither $\mathbb{P}$ or $\mathbb{Q}$ is known. Furthermore, in contrast to most existing approaches \citep[see, e.g.,][]{mitchell2023detectgpt,bao2024fastdetectgpt}, we do not restrict $\mathbb{Q}$ to a particular target LLM. Instead, $\mathbb{Q}$ corresponds to a mixture distribution, representing a mixture of LLM distributions. Define the following pair of hypotheses: 
\begin{align}\label{eqn:testinghypotheses}
\mathcal{H}_0: \bm{X} \sim \mathbb{P} \textup{ versus } \mathcal{H}_1:  \bm{X} \sim \mathbb{Q}.
\end{align}
We aim to introduce a classification rule $S>c$, such that if $S>c$, we classify the text as LLM-generated. Here, $S$ denotes a statistical measure used to distinguish between $p$ and $q$, and $c$ denotes a classification threshold. Our goal is twofold:
\begin{enumerate}[leftmargin=*]
    \item To devise a more powerful statistic $S$, such that, when varying the the classification threshold $c$, the resulting classifier achieves larger area under the curve (AUC) than those proposed in the literature. 
    \item To compute a threshold $c$ such that the type-I error (i.e., when $\bm{X}\sim \mathbb{P}$, the probability of wrongly concluding $\bm{X}\sim \mathbb{Q}$) of the resulting classification rule is controlled at a pre-specified significance level $0<\alpha<1$.
\end{enumerate}

\subsection{Logits-based detector}\label{sec:logitsbaseddetector}
As reviewed in Section \ref{sec:related}, logits-based detectors are popular statistics-based detectors whose statistical measure is constructed based on the target LLM's logits. A logit is the raw, unnormalized score that an LLM $\mathcal{M}$ assigns to each possible next token before applying the softmax function to convert these scores into probabilities. Mathematically, let $\ell^{\mathcal{M}}_t(x_t|\bm{x}_{<t})$ denote the logit assigned by $\mathcal{M}$ to token $x_t$ given the preceding context $\bm{x}_{<t}$. Then the model's predicted probability for token $x_t$ is given by
\begin{equation}\label{eqn:logits}
    q_t^{\mathcal{M}}(x_t|\bm{x}_{<t})=\frac{\exp\{\tau^{-1}\ell^{\mathcal{M}}_t(x_t|\bm{x}_{<t})\}}{\sum_{x\in \mathcal{V}} \exp\{\tau^{-1}\ell^{\mathcal{M}}_t(x|\bm{x}_{<t})\}},
\end{equation}
where $\tau>0$ denotes the temperature parameter. 

A notable logits-based method is Fast-DetectGPT \citep{bao2024fastdetectgpt}, which is a computationally efficient variant of the earlier DetectGPT  \citep{mitchell2023detectgpt}. Fast-DetectGPT is built upon the following statistical measure to detect text generated by $\mathcal{M}$, 
\begin{equation}\label{eqn:fastdetect}
    \frac{\sum_t \log  q_t^{\mathcal{M}}(X_t| \bm{X}_{<t})-\sum_t \mathbb{E}_{\widetilde{X}_t \sim q_t^{\mathcal{S}}(\bullet | \bm{X}_{<t})} [\log q_t^{\mathcal{M}}(\widetilde{X}_t | \bm{X}_{<t})]}{\sqrt{\sum_t \textrm{Var}_{\widetilde{X}_t \sim q_t^{\mathcal{S}}(\bullet | \bm{X}_{<t})} (\log q_t^{\mathcal{M}}(\widetilde{X}_t | \bm{X}_{<t}))}},
\end{equation}
where $q_t^{\mathcal{S}}$ denotes the next-token prediction distribution of a \textit{sampling} model $\mathcal{S}$, used to sample $\widetilde{X}_t$ given $\bm{X}_{<t}$, and which may equal to or differ from the \textit{scoring} model $\mathcal{M}$.

The first term in the numerator of \eqref{eqn:fastdetect} is the log-likelihood of $\bm{X}$ under the target model $\mathcal{M}$. The second term in the numerator and the denominator serve as centering and normalization terms, respectively, ensuring that the statistic has approximately zero mean and unit variance when $\bm{X}$ is generated under the sampling model~$\mathcal{S}$. 

According to \eqref{eqn:logits}, it is immediate to see that the presence of the centering term cancels out the normalizing constant $\sum_{x\in \mathcal{V}} \exp\{\tau^{-1}\ell^{\mathcal{M}}_t(x|\bm{X}_{<t})\}$ that appears in the denominator of \eqref{eqn:logits}. As a result, the numerator of \eqref{eqn:fastdetect} becomes equal to  $$\frac{1}{\tau}\sum_t [\ell^{\mathcal{M}}_t(X_t|\bm{X}_{<t})-\mathbb{E}_{\widetilde{X}_t \sim q_t^{\mathcal{S}}(\bullet | \bm{X}_{<t})}\ell^{\mathcal{M}}_t(\widetilde{X}_t|\bm{X}_{<t})].$$
Furthermore, the normalization term in the denominator cancels the temperature parameter $\tau$, so that \eqref{eqn:fastdetect} becomes exactly the standardized version of the logits: 
\begin{equation}\label{eqn:logitsfast}
    S_{\textrm{Fast}}(\bm{X})=\frac{\sum_t \ell_t^{\mathcal{M}}(X_t| \bm{X}_{<t})-\sum_t \mathbb{E}_{\widetilde{X}_t \sim q_t^{\mathcal{S}}(\bullet | \bm{X}_{<t})} [\ell_t^{\mathcal{M}}(\widetilde{X}_t | \bm{X}_{<t})]}{\sqrt{\sum_t \textrm{Var}_{\widetilde{X}_t \sim q_t^{\mathcal{S}}(\bullet | \bm{X}_{<t})} (\ell_t^{\mathcal{M}}(\widetilde{X}_t | \bm{X}_{<t}))}}.
\end{equation}

\begin{figure}[t]
    \centering
    \includegraphics[width=0.95\linewidth]{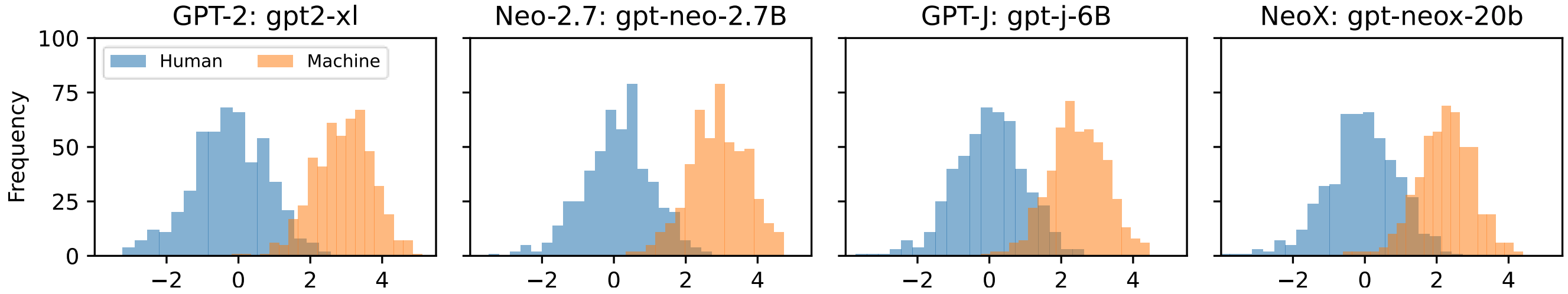}
    \caption{Histograms of the Fast-DetectGPT statistic evaluated on human-authored text and text generated by various LLMs, taken from \citet{bao2024fastdetectgpt}.}
    \label{fig:statdiff}
\end{figure}
The rationale for using \eqref{eqn:logitsfast} as the  statistic is that, LLM-generated text tends to yield larger values of this statistic compared to human-written text on average. To justify this observation, we present the following theorem: 
\begin{theorem}\label{thm:statdiff}
    Suppose the sampling model $\mathcal{S}$ is the same to the scoring model (i.e., the target model $\mathcal{M}$). Then as the temperature parameter $\tau\to 0^+$, we have
    \begin{equation*}
        \mathbb{E}_{\bm{X} \sim \mathbb{P}} S_{\textrm{Fast}}(\bm{X})\le \mathbb{E}_{\bm{X} \sim \mathbb{Q}^{\mathcal{M}}} S_{\textrm{Fast}}(\bm{X}),
    \end{equation*}
    where $\mathbb{Q}^{\mathcal{M}}$ denotes the probability distribution of text generated by the target model $\mathcal{M}$. 
\end{theorem}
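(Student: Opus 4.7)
The plan is to take the low-temperature limits of the two expectations separately. The underlying intuition is that under $\mathbb{Q}^{\mathcal{M}}$, the observed token $X_t$ and the reference token $\widetilde{X}_t$ are conditionally iid given $\bm{X}_{<t}$, so the numerator of $S_{\textrm{Fast}}$ is a mean-zero martingale difference sum; whereas under $\mathbb{P}$, $X_t$ need not equal the argmax that the low-temperature sampling distribution favors, so each centered logit converges to a non-positive value while the denominator vanishes.

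First I would record the low-temperature limit of $q_t^{\mathcal{M}}$: from the softmax formula~\eqref{eqn:logits}, as $\tau \to 0^+$, $q_t^{\mathcal{M}}(\cdot | \bm{x}_{<t})$ places mass $1 - o(1)$ on the argmax token $x_t^*(\bm{x}_{<t}) := \arg\max_{x\in\mathcal{V}} \ell_t^{\mathcal{M}}(x | \bm{x}_{<t})$ (assumed unique for simplicity). Consequently, the centering term $\mathbb{E}_{\widetilde{X}_t}[\ell_t^{\mathcal{M}}(\widetilde{X}_t | \bm{x}_{<t})] \to a_t(\bm{x}_{<t}) := \max_x \ell_t^{\mathcal{M}}(x | \bm{x}_{<t})$ and the normalization $\mathrm{Var}_{\widetilde{X}_t}[\ell_t^{\mathcal{M}}(\widetilde{X}_t | \bm{x}_{<t})] \to 0^+$.

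For $\mathbb{E}_{\mathbb{Q}^{\mathcal{M}}}[S_{\textrm{Fast}}]$, the key observation is that $X_t | \bm{X}_{<t} \sim q_t^{\mathcal{M}}(\bullet | \bm{X}_{<t})$ under $\mathbb{Q}^{\mathcal{M}}$, so by the tower property each summand in the numerator has conditional mean zero. As $\tau \to 0^+$, the trajectory $\bm{X}$ concentrates in probability on the deterministic greedy path $\bm{x}^*$ and the denominator becomes asymptotically deterministic; a careful handling of the resulting $0/0$ form via an exponential expansion of the softmax probabilities then shows $\mathbb{E}_{\mathbb{Q}^{\mathcal{M}}}[S_{\textrm{Fast}}] \to 0$. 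For $\mathbb{E}_{\mathbb{P}}[S_{\textrm{Fast}}]$, the limits identified in the first step imply that the numerator converges almost surely to $\sum_t [\ell_t^{\mathcal{M}}(X_t | \bm{X}_{<t}) - a_t(\bm{X}_{<t})] \le 0$, with strict inequality on the event $\{\bm{X} \ne \bm{x}^*\}$, which has positive $\mathbb{P}$-probability unless $\mathbb{P}$ degenerates to the greedy distribution. Since the denominator tends to $0^+$, the ratio diverges to $-\infty$ (and equals $0$ otherwise), so $\lim_{\tau\to 0^+} \mathbb{E}_{\mathbb{P}}[S_{\textrm{Fast}}] \le 0$. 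Combining this with the limit under $\mathbb{Q}^{\mathcal{M}}$ yields the inequality.

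The main obstacle is the $0/0$ indeterminacy of $S_{\textrm{Fast}}$ under $\mathbb{Q}^{\mathcal{M}}$: pointwise along individual sample paths, the ratio does not vanish even though the numerator has exact mean zero. Resolving this cleanly will require an asymptotic expansion of $q_t^{\mathcal{M}}$ around $\tau = 0^+$ (e.g., $q_t^{\mathcal{M}}(x) \asymp \exp[(\ell_t^{\mathcal{M}}(x | \bm{x}_{<t}) - a_t(\bm{x}_{<t}))/\tau]$ for $x \ne x_t^*$), combined with a dominated-convergence argument that exploits the mean-zero structure of the numerator to control the expected value of the normalized ratio.
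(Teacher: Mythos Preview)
Your proposal follows essentially the same route as the paper: show that as $\tau\to 0^+$ the centering term converges to the pointwise maximum of the logits, so each summand in the numerator of $S_{\textrm{Fast}}$ is $\le 0$ for every realization, hence $\mathbb{E}_{\mathbb{P}}[S_{\textrm{Fast}}]\le 0$; and under $\mathbb{Q}^{\mathcal{M}}$ the numerator vanishes almost surely because $X_t$ itself concentrates on the argmax, giving $\mathbb{E}_{\mathbb{Q}^{\mathcal{M}}}[S_{\textrm{Fast}}]=0$. You are in fact more careful than the paper in flagging the $0/0$ indeterminacy under $\mathbb{Q}^{\mathcal{M}}$ (the denominator also tends to $0$), which the paper's proof simply does not address; your proposed resolution via an exponential expansion of $q_t^{\mathcal{M}}$ around $\tau=0^+$ is a reasonable way to close that gap.
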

The proof of Theorem \ref{thm:statdiff} is simple. When $\mathcal{S}=\mathcal{M}$, it suffices to show that 
\begin{equation}\label{eqn:someeqn}
    \sum_t \mathbb{E}_{\bm{X}\sim \mathbb{P}}\ell_t^{\mathcal{M}}(X_t| \bm{X}_{<t})-\sum_t \mathbb{E}_{\substack{\bm{X}_{<t}\sim \mathbb{P}\\\widetilde{X}_t \sim q_t^{\mathcal{M}}(\bullet | \bm{X}_{<t})}} [\ell_t^{\mathcal{M}}(\widetilde{X}_t | \bm{X}_{<t})]\le 0
\end{equation}
As the temperature parameter approaches $0$, sampling $\widetilde{X}_t$ from $q_t^{\mathcal{M}}$ concentrates all mass on the token that maximizes the logit $\ell_t^{\mathcal{M}}(\bullet|\bm{X}_{<t})$. Consequently, the left-hand-side of \eqref{eqn:someeqn} is always non-positive, regardless of the distribution $\mathbb{P}$. 

Theorem \ref{thm:statdiff} therefore establishes the result for sufficiently small temperatures. In practice, however, the temperature used by modern LLMs is often nonzero. Nevertheless, empirical evidence consistently shows that this statistic tends to take larger values on LLM-generated text than on human-written text; see Figure \ref{fig:statdiff} for illustrations. Consequently, when this statistic exceeds certain threshold, the input text can be classified as LLM-generated.

However, when the temperature is large, relying solely on logits is not sufficient to consistently distinguish between different distributions; see the ``Kingdom of Bit’’ example discussed in \citet{zhou2025adadetectgpt}. To address this limitation, \citet{zhou2025adadetectgpt} develop AdaDetectGPT, based on the following statistic
\begin{equation}\label{eqn:statada}
    S_{\textrm{Ada}}(\bm{X})=\frac{\sum_t w(\log  q_t^{\mathcal{M}}(X_t| \bm{X}_{<t}))-\sum_t \mathbb{E}_{\widetilde{X}_t \sim q_t^{\mathcal{S}}(\bullet | \bm{X}_{<t})} [w(\log q_t^{\mathcal{M}}(\widetilde{X}_t | \bm{X}_{<t}))]}{\sqrt{\sum_t \textrm{Var}_{\widetilde{X}_t \sim q_t^{\mathcal{S}}(\bullet | \bm{X}_{<t})} (w(\log q_t^{\mathcal{M}}(\widetilde{X}_t | \bm{X}_{<t})))}}.
\end{equation}
Comparing \eqref{eqn:statada} with \eqref{eqn:fastdetect}, we see that the two statistics share very similar structure. The only difference is that \eqref{eqn:statada} applies a one-dimensional witness function 
$w$ to the log-probabilities before performing the same centering and normalization steps. \citet{zhou2025adadetectgpt} propose to parameterize   this witness function $w$ using B-spline basis functions and to learn it adaptively from data by maximizing the AUC of the resulting detector. 

Their key observation lies in that the numerator of \eqref{eqn:statada} forms a martingale difference sequence with respect to the filtration $\{\sigma(\bm{X}_{<t})\}_{t\ge 1}$ where $\sigma(\bm{X}_{<t})$ stands for the $\sigma$-algebra generated by $\bm{X}_{<t}$. This enables the application of the classical martingale central limit theorem \citep[MCLT,][]{hall2014martingale} to derive closed-form expression for the false negative rate (FNR) and true negative rate (TNR) of the resulting classifier, which facilitates the derivation of an objective function for optimizing $w$. Theoretically, they establish statistical guarantees for their classifier, including lower bounds on its FNR and false positive rate (FPR), as well as upper bounds on its TNR and true positive rate (TPR).

However, using a one-dimensional witness function 
$w$ is insufficient to substantially improve classification accuracy. Moreover, neither Fast-DetectGPT nor AdaDetectGPT provides control over the type-I error. Both methods are also model-dependent: their test statistics are specific to a given target model and thus require the target model to be specified. In practice, however, users typically do not have such prior information -- they wish to determine whether a text is generated by an LLM at all, without knowing in advance which specific LLM might produce it. We will address these limitations in the next two sections.

\section{Data}\label{sec:data}
To address the aforementioned limitations of DetectGPT and AdaDetectGPT, we construct a large dataset consisting of both human-written text and LLM-generated text to train our detector. We detail both types of data in this section. 
\subsection{Human-written text}\label{sec:human-written text}

\textbf{Data sources.} Our preliminary study reveals that the performance of ML-based detectors depends heavily on the training data.  For instance, if the training dataset contains a large amount of text from one domain over others, then the resulting detector tends to achieve high classification accuracy on that domain while performing poorly on other underrepresented domains. This observation motivates us to collect training data that is as diverse as possible across a wide range of domains. 

Guided by this principle, we collect a large corpus of human-written texts across eight domains, listed alphabetically: (i) academia, (ii) finance, (iii) government, (iv) knowledge, (v) legislation, (vi) medicine, (vii) news, and (viii) user reviews. Table~\ref{tab:humanwrittentext} provides a summary of the datasets used for each domain.

\begin{table}
    \centering
    \begin{tabular}{|c|l|}\hline
     Academic & Paper abstracts from arXiv \citep{arxiv2024kaggle} \\
     (50000) & Research papers published in PubMed \citep{cohan2018discourse}\\ \hline
    \multirow{2}{*}{Finance} & Public Reddit discussions on finance \\
    & Financial Opinion Mining and Question Answering dataset \citep{maia2018www} \\
    (49254) & Bloomberg articles on finance from 2006 to 2013  \citep{josephgflowers2025financeinstruct}\\ \hline
    Government & \multirow{2}{*}{U.S. government reports and expert-written summaries \citep{efficient2021huang}}\\ 
    (38887) & \\ \hline
    Knowledge & \multirow{2}{*}{Wikipedia articles prior to April 2022 \citep{wikidump}} \\ 
    (49251) & \\ \hline
    Legislation & \multirow{2}{*}{U.S. and European union legislation documents \citep{multieurlex2021chalkidis,lexglue2021chalkidis}}  \\
    (49987) & \\ \hline
    Medicine & Public patient notes from open-source datasets \citep{khandekar2024medcalc}\\
    (44917) & Expert answers to medical entrance exam questions \citep{medmcqa2022flores}\\ \hline
    News & \multirow{2}{*}{Articles from CNN and BBC \citep{greene2006practical,Narayan2018DontGM}} \\ 
    (49954) & \\ \hline
    User reviews & \multirow{2}{*}{Reviews from Yelp, ImDB and Amazon \citep{maas2011wordvector,zhang2015character}}\\ 
    (47902) & \\ \hline
    \end{tabular}
    \caption{Summary of human-written text datasets by domain. The numbers in parentheses indicate the total number of texts collected for each category.}
    \label{tab:humanwrittentext}
\end{table}

These texts are collected from publicly available platforms such as Kaggle, GitHub, and Hugging Face. In each domain, we gathered data from at least two different sources to ensure diversity, resulting in a total of 18 datasets. To avoid contamination by LLM-generated content, we retain only datasets written prior to November 2022 (the public release date of ChatGPT) and discard all data released thereafter.

\textbf{Data processing.} After collection, all texts were processed to ensure high quality before being fed into our algorithm to train the detector. When constructing the training dataset, we randomly sampled an equal number of texts from each domain to avoid overrepresenting any domain. We next apply the following processing steps to each text: removing control characters, trimming extra spaces and newlines, deleting special tokens, and eliminating duplicates (see Section~\ref{sec:data-human} in the Supplementary Material for details of each step).  

Following this process, each domain contains over 38{,}000 texts, resulting in a combined dataset of more than 370{,}000 human-written texts. We summarize the total number of texts per domain in Table~\ref{tab:humanwrittentext}. Figure~\ref{fig:n-word} presents boxplots of the number of words in human-written texts for each domain. It can be seen that the average text length ranges from a few dozen to several hundred words. This variation reflects the diversity of writing styles and formats across domains.

\begin{figure}[t]
    \centering
    \includegraphics[width=0.83\linewidth]{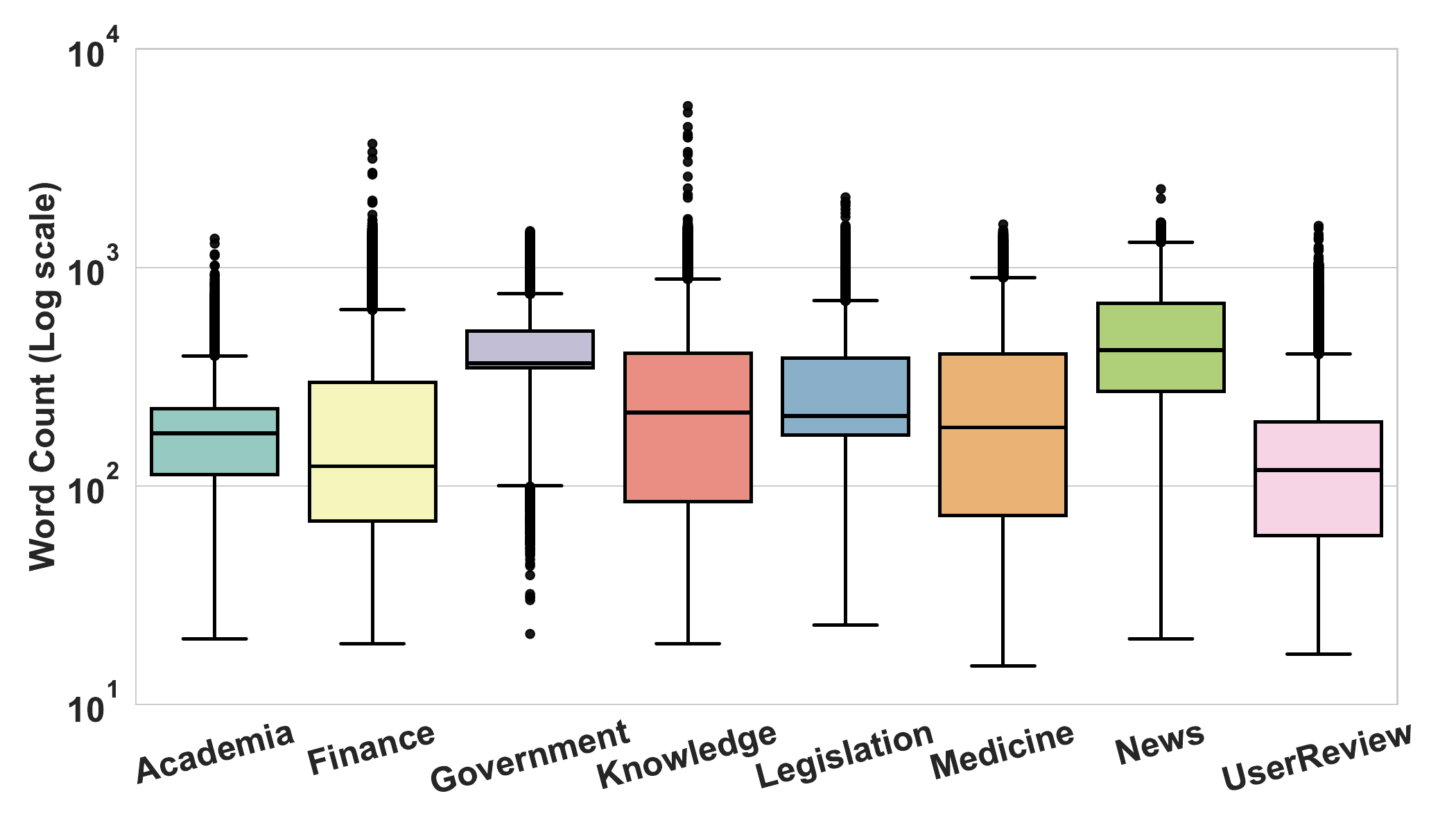}
    \vspace*{-10pt}
    \caption{Boxplots of the number of words in human-written texts for each domain in the collected dataset.}
    \label{fig:n-word}
\end{figure}

\subsection{LLM-generated text}
We next prompt the LLM to generate LLM-authored text based on the aforementioned  human's texts. The purpose of generating these texts is twofold: (i) they are used to train our detector, and (ii) they are used to evaluate the detection power and accuracy of our detector relative to existing methods. To ensure that the resulting detector is model-agnostic, we generate texts using a diverse set of widely used LLMs, including GPT-4o \citep{hurst2024gpt}, Claude \citep{anthropic2024claude}, Gemini \citep{comanici2025gemini}, and Grok \citep{xAI2023Grok}. In addition, we employ a variety of prompts to obtain diverse outputs from the LLMs based on the same human-written inputs, including rewriting, polishing and expanding, and summarization. Further details of the data generating procedure are provided in Section~\ref{sec:data-machine} of the Supplementary Material.

\section{Methodology}\label{sec:method}
We first describe the procedure for training the statistic $S$ used by our detector (Section \ref{sec:fine-tune}). We next detail the determination of the threshold $c$ to ensure valid hypothesis testing (Section \ref{sec:testing}).
\subsection{Training}\label{sec:fine-tune}
Our detector builds upon Fast-DetectGPT. As discussed in Section~\ref{sec:logitsbaseddetector}, relying solely on log-probabilities for constructing the statistical measure can be suboptimal when the temperature is large. AdaDetectGPT partly mitigates this issue by applying a  witness function to the log-probabilities; however, the expressive power of a one-dimensional witness function remains limited. To address these limitations, we consider the following statistic: 
\begin{equation}\label{eqn:ourstatistic}
    S(\bm{X}) \coloneqq\frac{\sum_{t=1}^T [ w_t(X_t|\bm{X}_{<t})-\mathbb{E}_{\widetilde{X}_t \sim q_t(\bullet | \bm{X}_{<t})} \{ w_t(\widetilde{X}_t | \bm{X}_{<t})\}]}{\sqrt{\sum_t \textrm{Var}_{\widetilde{X}_t \sim q_t(\bullet |\bm{X}_{<t})} ( w_t(\widetilde{X}_t | \bm{X}_{<t}))}},
\end{equation}
for a sequence of functions $\{w_t\}_t$ adaptively learned from our constructed dataset described in Section 
\ref{sec:data}. 

The key difference between our statistical measure and that of AdaDetectGPT lies in the flexibility of the learned functions. In \eqref{eqn:ourstatistic}, each $w_t$ can be an arbitrary function of both the preceding tokens $\bm{X}_{<t}$ and the current token $X_t$, and it is allowed to vary over $t$. In contrast, AdaDetectGPT employs a time-invariant witness function $w$ that takes as input only a one-dimensional log-probability. Indeed, by setting $w_t = w \circ \log q_t^{\mathcal{M}}$, our statistic reduces to $S_{\mathrm{Ada}}(\bm{X})$. More generally, employing a sequence of functions $\{w_t\}_{t=1}^T$ substantially increases the expressive power of the resulting statistical measure. We will demonstrate this advantage empirically in Section \ref{sec:experiments}.

We next define an objective function for learning $\{w_t\}_t$ that maximizes the classification accuracy of the resulting statistic. Since this is inherently a binary classification problem, a natural objective to consider is the AUC. Notice that maximizing AUC is equivalent to maximizing the TNR of the detector at any fixed FNR. We follow \citet{zhou2025adadetectgpt} to derive our learning objective. 

Specifically, similar to \eqref{eqn:statada}, when $\bm{X}\sim \mathbb{Q}$, for any sequence $\{w_t\}_t$, the numerator of \eqref{eqn:ourstatistic} forms a martingale difference sequence with respect to $\{\sigma(\bm{X}_{<t})\}_{t\ge 1}$. By setting the classification threshold $c$ to the upper $\alpha$th quantile of the standard normal distribution (denoted by $z_{\alpha}$), it follows from the MCLT that the FNR of the resulting detector $S(\bm{X})$ is asymptotically controlled at level $\alpha$. Applying the MCLT again allows us to derive a lower bound on its corresponding TNR, which we formalize in the following theorem.
\begin{theorem}\label{thmLw}
    Under conditions specified in Section~\ref{sec:proof} of the Supplementary Material, the TNR of our detector, at an FNR level of $\alpha$, is asymptotically lower bounded by $\min\{\alpha+\phi(z_{\alpha})L_w,1-\alpha\}$, where $\phi$ denotes the probability density function of the standard normal distribution, and 
    \begin{equation*}
        L_w=\frac{\sum_{t} [\mathbb{E}_{\substack{\widetilde{X}_t \sim q_t(\bullet | \bm{X}_{<t})\\ \bm{X}_{<t} \sim \mathbb{P}}} \{ w_t(\widetilde{X}_t | \bm{X}_{<t})\}-\mathbb{E}_{\substack{\widetilde{X}_t \sim p_t(\bullet | \bm{X}_{<t})\\ \bm{X}_{<t} \sim \mathbb{P}}}\{w_t(X_t|\bm{X}_{<t})\}]}{\sqrt{\sum_t \textrm{Var}_{\substack{\widetilde{X}_t \sim q_t(\bullet |\bm{X}_{<t})\\ \bm{X}_{<t} \sim \mathbb{P}}} \{ w_t(\widetilde{X}_t | \bm{X}_{<t})\}}}.
    \end{equation*}
\end{theorem}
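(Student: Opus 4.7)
The plan is to combine a martingale central limit theorem (MCLT) for the numerator of $S(\bm{X})$ under $\mathbb{P}$ with an elementary integral estimate for the standard normal CDF near the quantile $z_\alpha$.

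\emph{Step 1 (decomposition under $\mathbb{P}$).} Although the numerator of $S(\bm{X})$ is a martingale-difference sum with respect to $\{\sigma(\bm{X}_{<t})\}_{t\ge 1}$ only under $\mathbb{Q}$, I would add and subtract $\mathbb{E}_{\widetilde{X}_t\sim p_t(\bullet|\bm{X}_{<t})}\{w_t(\widetilde{X}_t|\bm{X}_{<t})\}$ to rewrite it, under $\bm{X}\sim \mathbb{P}$, as $M_T+D_T$, where
\begin{equation*}
M_T=\sum_{t=1}^T\bigl[w_t(X_t|\bm{X}_{<t})-\mathbb{E}_{\widetilde{X}_t\sim p_t}\{w_t(\widetilde{X}_t|\bm{X}_{<t})\}\bigr]
\end{equation*}
is a genuine martingale-difference sum under $\mathbb{P}$ and
\begin{equation*}
D_T=\sum_{t=1}^T\bigl[\mathbb{E}_{\widetilde{X}_t\sim p_t}\{w_t(\widetilde{X}_t|\bm{X}_{<t})\}-\mathbb{E}_{\widetilde{X}_t\sim q_t}\{w_t(\widetilde{X}_t|\bm{X}_{<t})\}\bigr]
\end{equation*}
is an adapted predictable drift.

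\emph{Step 2 (CLT and LLN).} Under the Lindeberg and ergodicity conditions in Section~\ref{sec:proof} of the Supplementary Material, the MCLT gives $M_T/\sqrt{\sum_t \textrm{Var}_{p_t}(w_t)}\xrightarrow{d} N(0,1)$ under $\mathbb{P}$, while a law of large numbers implies that the random denominator $V_T^{1/2}=\sqrt{\sum_t \textrm{Var}_{q_t}(w_t)}$ of $S$ concentrates on its $\mathbb{P}$-expectation, which is precisely the denominator of $L_w$. A parallel LLN applied to $D_T$ yields $D_T/V_T^{1/2}\xrightarrow{p} -L_w$. Collecting these with a ratio bound $\sigma_\mathbb{P}\le 1$ on the $p_t$-versus-$q_t$ conditional-variance scales (supplied by the supplement's assumptions), we obtain $S(\bm{X})\xrightarrow{d} \sigma_\mathbb{P} Z-L_w$ with $Z\sim N(0,1)$, so that, after a case analysis on the sign of $z_\alpha+L_w$,
\begin{equation*}
\textrm{TNR}=\mathbb{P}(S(\bm{X})\le z_\alpha\mid \bm{X}\sim \mathbb{P})\to \Phi\!\bigl((z_\alpha+L_w)/\sigma_\mathbb{P}\bigr)\ \ge\ \Phi(z_\alpha+L_w).
\end{equation*}

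\emph{Step 3 (Gaussian CDF bounds).} Since $\Phi(z_\alpha)=\alpha$ and $\phi$ is symmetric about $0$, the elementary inequality $\phi(u)\ge\phi(z_\alpha)$ for $u\in[z_\alpha,-z_\alpha]$ yields, for $L_w\in[0,-2z_\alpha]$,
\begin{equation*}
\Phi(z_\alpha+L_w)-\alpha=\int_{z_\alpha}^{z_\alpha+L_w}\phi(u)\,du\ \ge\ \phi(z_\alpha)\,L_w,
\end{equation*}
while for $L_w\ge -2z_\alpha$ the identity $\Phi(-z_\alpha)=1-\alpha$ gives the sharper bound $\Phi(z_\alpha+L_w)\ge 1-\alpha$. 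Pasting the two regimes together recovers the lower bound $\min\{\alpha+\phi(z_\alpha)L_w,\,1-\alpha\}$; the case $L_w<0$ is covered by the convexity of $\Phi$ on $(-\infty,0]$, under which the tangent at $z_\alpha$ remains a valid lower bound.

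\emph{Main obstacle.} The Gaussian CDF step reduces the theorem to a quantitative CLT, so the real work lies in Step 2. Because the centering and normalization inside $S(\bm{X})$ are built from the LLM conditional law $q_t$ rather than from $p_t$, the statistic is \emph{not} a self-normalized martingale under $\mathbb{P}$, and one must simultaneously (i) verify a Lindeberg-type condition for $M_T$ that holds uniformly over the function sequences $\{w_t\}_{t=1}^T$ learned from data; (ii) pass from the $q_t$-based normalizer $V_T^{1/2}$ to the $p_t$-conditional variance of $M_T$ while keeping $\sigma_\mathbb{P}\le 1$; and (iii) establish $D_T/V_T^{1/2}\xrightarrow{p} -L_w$ rather than having the drift diverge. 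Handling these three points with conditions broad enough to cover the rich $\{w_t\}$ families used in practice is the principal technical challenge.
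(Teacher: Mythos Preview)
Your decomposition $S=M_T/V_T^{1/2}+D_T/V_T^{1/2}$ and the Gaussian CDF estimate in Step~3 match the paper's proof exactly (the paper writes $S=S^{(1)}-S^{(2)}$ with the same two pieces, and your Step~3 is a direct proof of its Lemma~\ref{lem:Taylorexpansion-of-normal-cdf}). The substantive difference is in how the drift $S^{(2)}=-D_T/V_T^{1/2}$ is handled. You assume a law of large numbers forces $S^{(2)}\xrightarrow{p}L_w$, so that $S$ has a deterministic shift in the limit and you can apply the CDF bound to the single number $\Phi(z_\alpha+L_w)$. The paper does \emph{not} impose any ergodicity condition on the drift: its Assumptions~\ref{ass:equal-variance}--\ref{ass:variance-ratio} concern only the ratios $\sigma_{q,L}/\sigma_{p,L}\to1$ and $\bar\sigma^2/\sigma^2\to1$, and under these alone $S^{(2)}(\bm{X})$ remains a genuinely random quantity. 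Instead the paper first obtains the pathwise-type bound $\mathbb{P}_{\bm{X}\sim p}(S\le z_\alpha)\ge \mathbb{E}\,\Phi(z_\alpha+S^{(2)}(\bm{X}))+o(1)$ via the uniform MCLT (Lemma~\ref{thm: convergence-rate-MCLT}), and then invokes the inequality $\mathbb{E}\,\Phi(z_\alpha+X)\ge\min\{1-\alpha,\,\alpha+\phi(z_\alpha)\mathbb{E}X\}$ valid for \emph{any} random $X$ (Lemma~\ref{lem:Taylorexpansion-of-normal-cdf}), identifying $\mathbb{E}S^{(2)}$ with $L_w$ afterward. This sidesteps your point~(iii) entirely.

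A second, smaller discrepancy: you posit $\sigma_{\mathbb{P}}\le1$ to pass from $\Phi((z_\alpha+L_w)/\sigma_{\mathbb{P}})$ to $\Phi(z_\alpha+L_w)$, but this inequality reverses when $z_\alpha+L_w<0$, so your ``case analysis on the sign'' would need more care. The paper instead assumes the ratio $\sigma_{q,L}/\sigma_{p,L}\to1$ (Assumption~\ref{ass:equal-variance}) and absorbs the discrepancy into an $o_p(1)$ term via a Lipschitz bound on $\Phi$, avoiding any sign issue. In short, your route is sound under stronger hypotheses (ergodic drift, one-sided variance bound), but the paper's argument is leaner: it needs only variance-ratio stabilization and the Jensen-type lemma for $\mathbb{E}\Phi(z_\alpha+X)$.
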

We make a few remarks. First, $L_w$ is a scalar whose value depends on $\{w_t\}_t$. Second,  maximizing the lower bound in Theorem \ref{thmLw} is equivalent to maximizing $L_w$. Third, because $L_w$ does not depend on the FNR level $\alpha$, optimizing $L_w$ simultaneously maximizes the TNR lower bound for all $\alpha$. Consequently, the maximizer of $L_w$ also maximizes a lower bound on the AUC of the resulting detector. Finally, while it is possible to derive a closed-form expression for the TNR itself (rather than a lower bound), such an expression generally depends on $\alpha$. Directly maximizing it would produce an optimizer tailored to a specific FNR level, without guaranteeing optimal performance across other levels. In contrast, maximizing the lower-bound-based objective $L_w$ yields an ``$\alpha$-uniform'' optimizer.

Based on the above discussion, it is natural to set $L_w$ to the objective function for optimization. We make a few modifications to further simplify the optimization. First, we replace the expectation $\bm{X}_{<t}\sim \mathbb{P}$ in the first term of the numerator of $L_w$ with $\bm{X}_{<t}\sim \mathbb{Q}$. The resulting numerator then simplifies to 
\begin{equation}\label{eqn:numeratordiff}
     \mathbb{E}_{\bm{X}\sim \mathbb{P}} \Big[\sum_t w_t(X_t|\bm{X}_{<t})\Big]-\mathbb{E}_{\bm{X}\sim \mathbb{Q}} \Big[\sum_t w_t(X_t|\bm{X}_{<t})\Big]. 
\end{equation}
Notice that maximizing \eqref{eqn:numeratordiff} is closely related to the maximum mean discrepancy measure widely studied in machine learning \citep{gretton2012kernel}. 
Additionally, both expectations in \eqref{eqn:numeratordiff} can be approximated by empirical averages computed from the human-written and LLM-generated texts in our constructed dataset. 

Second, following this modification of the numerator, we adjust the denominator of $L_w$ accordingly as the standard deviation of the empirical averages. Specifically, we set the denominator to $\sqrt{\textrm{Var}_{\bm{X}\sim \mathbb{P}}(\sum_t w_t(X_t|\bm{X}_{<t})+\textrm{Var}_{\bm{X}\sim \mathbb{Q}}(\sum_t w_t(X_t|\bm{X}_{<t}))}$, which yielding the following two-sample $t$-test-type objective function, 
\begin{align}\label{eqn:finalobjective}
    \frac{\widehat{\mathbb{E}}_{\bm{X} \sim \mathbb{Q}} \{\sum_t w_t(X_t|\bm{X}_{<t})\} - \widehat{\mathbb{E}}_{\bm{X} \sim \mathbb{P}} \{\sum_t w_t(X_t|\bm{X}_{<t})\}}{\sqrt{\widehat{\textrm{Var}}_{\bm{X}\sim \mathbb{P}}(\sum_t w_t(X_t|\bm{X}_{<t})+\widehat{\textrm{Var}}_{\bm{X}\sim \mathbb{Q}}(\sum_t w_t(X_t|\bm{X}_{<t}))}},
\end{align}
where $\widehat{\mathbb{E}}$ and $\widehat{\textrm{Var}}$ denote the empirical average and sampling variance estimators computed from the constructed dataset. 

To optimize \eqref{eqn:finalobjective}, we need to specify a function class for $\{w_t\}_t$. Here, we parameterize $\{w_t\}_t$ using a base language model $\{q_t^{\mathcal{B}}\}_t$, since both $q_t^{\mathcal{B}}$ and $w_t$ take $X_t$ and $\bm{X}_{<t}$ as input and output a scalar value. We then fine-tune the parameters of $\{q_t^{\mathcal{B}}\}_t$ on our constructed dataset to maximize the objective in \eqref{eqn:finalobjective}. Finally, we plug in the fine-tuned model for $\{w_t\}_t$ in \eqref{eqn:ourstatistic} to obtain our statistical measure.

\subsection{Testing}\label{sec:testing}
In this section, we discuss how to choose the threshold $c$ to control the type-I error (e.g., FPR) for the testing hypotheses in \eqref{eqn:testinghypotheses}. Unlike the FNR, which is evaluated under $\bm{X}\sim \mathbb{Q}$, and for which the numerator of \eqref{eqn:ourstatistic} forms a martingale difference sequence, enabling the use of the MCLT to characterize its asymptotic distribution, the FPR is evaluated under  $\bm{X}\sim \mathbb{P}$. In this case, the asymptotic distribution of the test statistic is considerably more challenging to analyze.

To address this challenge, we estimate the null distribution empirically using our collected human-written texts, denoted by $\{\bm{X}^{(j)}\}_{j=1}^m$. For each $\bm{X}^{(j)}$, we compute our statistic $\bm{S}(\bm{X}^{(j)})$. Given a new text $\bm{X}$ to be tested, 
we compute its $p$-value as
\begin{align*}
    p\textup{-value} = \frac{1 + \sum_{j=1}^m \mathbb{I}\!\left(S(\bm{X}) < S(\bm{X}^{(j)}) \right)}{1 + m},
\end{align*}
and the threshold $c$ as the largest value of  $S(\bm{X})$ such that the resulting $p$-value is no larger than a pre-specified significance level $0<\alpha<1$. We reject the null and conclude that $\bm{X}$ is LLM-generated if the $p$-value is no larger than $\alpha$, or equivalently, if $S(\bm{X})>c$. 

Theoretically, the following theorem establishes the  validity of this procedure. 
\begin{theorem}\label{thm:typeI}
    The proposed test asymptotically controls the type-I error as $m\to \infty$. 
\end{theorem}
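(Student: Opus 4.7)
The plan is to exploit the i.i.d.\ (and in particular exchangeable) structure of the calibration sample $\{\bm{X}^{(j)}\}_{j=1}^m$ together with the test input $\bm{X}$ under $\mathcal{H}_0$. Under the null, $\bm{X}\sim\mathbb{P}$, the same distribution as each $\bm{X}^{(j)}$. Assuming $\bm{X}$ and the calibration set are drawn independently of each other and of the data used to fit $\{w_t\}_t$ (so that $S$ may be treated as a fixed measurable function conditional on training), the $m+1$ scalars $S(\bm{X}), S(\bm{X}^{(1)}),\ldots,S(\bm{X}^{(m)})$ are an i.i.d.\ sample from the common law of $S$ under $\mathbb{P}$, with survival function $G(x)=\mathbb{P}(S(\bm{X}^{(1)})>x)$.

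First I would rewrite the $p$-value as
\begin{equation*}
p\text{-value}=\frac{1+m\,\widehat{G}_m(S(\bm{X}))}{1+m},\qquad \widehat{G}_m(x)=\frac{1}{m}\sum_{j=1}^m\mathbb{I}(S(\bm{X}^{(j)})>x).
\end{equation*}
The Glivenko--Cantelli theorem gives $\sup_x|\widehat{G}_m(x)-G(x)|\to 0$ almost surely, and since $\bm{X}$ is independent of the calibration sample this yields $\widehat{G}_m(S(\bm{X}))-G(S(\bm{X}))\to 0$ in probability. Hence the $p$-value converges in probability to $G(S(\bm{X}))$. When $G$ is continuous, the probability integral transform makes $G(S(\bm{X}))$ uniform on $[0,1]$, so $\Pr_{\mathcal{H}_0}(p\text{-value}\le\alpha)\to\alpha$. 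A cleaner finite-sample route that I would also note is to invoke exchangeability directly: among the $m+1$ i.i.d.\ values, the rank of $S(\bm{X})$ is uniform on $\{1,\ldots,m+1\}$ (absent ties), which immediately yields $\Pr_{\mathcal{H}_0}(p\text{-value}\le\alpha)\le\alpha+1/(m+1)$, from which the asymptotic conclusion follows upon letting $m\to\infty$.

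The main technical subtlety will be the treatment of ties, which can occur with positive probability if the law of $S(\bm{X})$ under $\mathbb{P}$ has atoms; the ``$+1$'' appearing in both numerator and denominator of the $p$-value is precisely what enforces conservativeness in that case, but a careful bookkeeping (or an appeal to the one-sided behavior of $\widehat{G}_m$ at the potentially discontinuous point $S(\bm{X})$) is required to make the argument fully rigorous. A secondary, more conceptual point I expect the formal proof must address is that the theorem implicitly assumes the calibration sample is drawn independently of the training sample used to fit $\{w_t\}_t$; this is a data-splitting hypothesis that would need to be stated, since otherwise $S$ and $\{S(\bm{X}^{(j)})\}$ cease to be independent and the exchangeability backbone of the argument collapses.
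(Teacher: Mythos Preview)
Your proposal is correct and follows essentially the same route as the paper: Glivenko--Cantelli to replace the empirical survival function by $1-F$, followed by the probability integral transform so that $1-F(S(\bm{X}))\sim U(0,1)$ under $\mathcal{H}_0$. The paper's proof is actually terser than yours and silently assumes continuity of $F$ and independence of the calibration sample from training, so the subtleties you flag (ties, data splitting) and your alternative exchangeability argument are additions rather than deviations.
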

Empirically, we observe that the distribution of $S(\bm{X})$ varies substantially across different categories of human-written text. To account for this heterogeneity, when testing a new text $\bm{X}$, we ask the user to specify which of the eight categories described in Section~\ref{sec:data} the text belongs to, and we compute the corresponding $p$-value and classification threshold using the human-written texts from that category. When no domain information is available, we adopt a conservative strategy by reporting the maximum $p$-value across all eight categories. Figure~\ref{fig:null-distr} visualizes the null distributions of $S(\bm{X})$ for human-written texts across different categories.

\section{Real data analysis}\label{sec:experiments}
We evaluate the finite sample performance of the proposed detector in this section. We employ both in-distribution data, drawn from the eight categories described in Section~\ref{sec:data} and evaluated using sample splitting (Section~\ref{sec:experiments-collected}), and out-of-distribution data from external datasets (Section~\ref{sec:dataexternal}). Finally, we report the computational cost in Section~\ref{sec:computingtime}.

\subsection{In-distribution evaluation}\label{sec:experiments-collected}
We begin by evaluating the performance of our detector on the datasets collected in Section~\ref{sec:data}, comparing it against 9 representative baseline detectors from the literature, covering both statistics- and ML-based approaches: (i) \textit{Likelihood} \citep{gehrmann2019gltr};  (ii) \textit{Log-rank} \citep{gehrmann2019gltr}; (iii) Log Rank Ratio \citep[\textit{LRR}, ][]{su2023detectllm}; (iv) Fast-DetectGPT \citep[\textit{FDGPT},][]{bao2024fastdetectgpt}; (v) \textit{Binoculars} \citep{hans2024spotting}; (vi) \textit{RoBERTa} \citep{solaiman2019release}; (vii) \textit{RADAR} \citep{hu2023radar}; (viii) Imitate Before Detection \citep[\textit{ImBD},][]{chen2025imitate}; (ix) \textit{AdaDetectGPT} \citep{zhou2025adadetectgpt}. 
Since these detectors are primarily designed for classification rather than statistical inference, they typically construct a statistical measure $S$ without specifying a classification threshold $c$ for controlling the type-I error. Consequently, we evaluate their performance using the AUC, which assesses the quality of $S$ independent of $c$. 

To ensure a fair comparison, we split the eight data categories introduced in Section~\ref{sec:data} into training and testing subsets. 
The same training data is used to train all methods, and the same testing data is used to compute AUC scores. 
Furthermore, for all approaches that require sampling (e.g., FastDetectGPT and AdaDetectGPT), we use the same sampling model $q^{\mathcal{S}}$ when constructing the statistical measure.

\begin{figure}[t]
    \centering
    \includegraphics[width=1.0\linewidth]{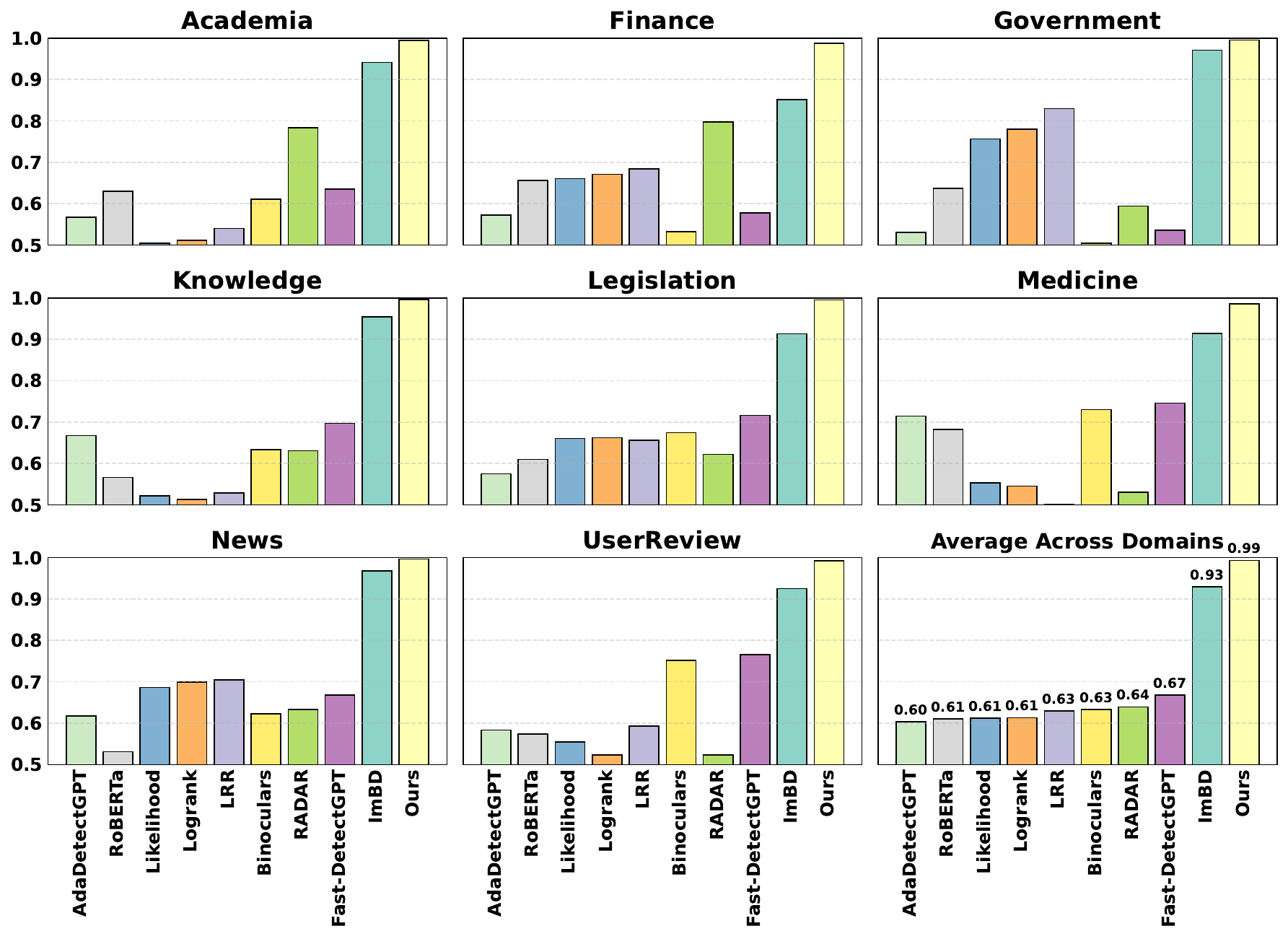}
    \caption{AUCs of various detectors when trained and evaluated on the eight domains of data described in Section~\ref{sec:data}. Each panel reports the AUC for one domain. The right bottom panel reports the average AUC across the eight domains.}
    \label{fig:roc-nice}
\end{figure}

We report the AUCs of these detectors when trained and evaluated on different data domains in Figure~\ref{fig:roc-nice}. It can be seen that: 
(i) the AUC of our proposed detector is consistently close to 1.0 across all domains, outperforming all baseline methods, with an average AUC across domains over 0.99; 
(ii) \textit{ImBD} achieves the second-best performance, but our method substantially outperforms it, with relative improvements \(\frac{\text{Our AUC} - \text{ImBD's AUC}}{1 - \text{ImBD's AUC}}\) exceeding 90\% in most cases; 
and (iii) all other baselines perform significantly worse than \textit{ImBD}.

Finally, we investigate the proposed procedure for selecting the classification threshold $c$ by evaluating the type-I error (i.e., FPR) and power (i.e., TPR) of our detector  at three significance levels, $\alpha \in \{0.01, 0.05, 0.1\}$. 
The empirical type-I error rates and power across the eight data categories are reported in Tables~\ref{tab:type1error} and~\ref{tab:power}, respectively. As shown in these tables, the type-I error rates are well controlled at the nominal levels, while the empirical power is close to $1$ in most cases. These results are consistent with our findings in Figure \ref{fig:roc-nice} and demonstrate the effectiveness of the proposed detector for statistical inference. 

\begin{table}[t]
    \centering
    \caption{Type~I error of our method on eight categories of human-written texts at three nominal significance levels $\alpha$.}
    \resizebox{\textwidth}{!}{%
    \begin{tabular}{l|cccccccc}
    \toprule
    $\alpha$ & Academia & Finance & Government & Knowledge & Legislation & Medicine & News & UserReview \\
    \midrule
0.01 & 0.000 & 0.000 & 0.013 & 0.007 & 0.013 & 0.013 & 0.027 & 0.000 \\
0.05 & 0.007 & 0.020 & 0.053 & 0.047 & 0.033 & 0.040 & 0.067 & 0.060 \\
0.10 & 0.053 & 0.073 & 0.093 & 0.073 & 0.080 & 0.080 & 0.113 & 0.100 \\
    \bottomrule
    \end{tabular}
    }
    \label{tab:type1error}
\end{table}

\begin{table}[t]
    \centering
    \caption{Power of our method on eight categories of LLM-generated texts at three nominal significance levels $\alpha$.}
    \resizebox{\textwidth}{!}{%
    \begin{tabular}{l|cccccccc}
    \toprule
    $\alpha$ & Academia & Finance & Government & Knowledge & Legislation & Medicine & News & User Reviews \\
    \midrule
0.01 & 0.933 & 0.827 & 0.987 & 0.947 & 0.900 & 0.873 & 0.973 & 0.953 \\
0.05 & 0.967 & 0.933 & 0.993 & 0.980 & 0.973 & 0.960 & 0.993 & 0.973 \\
0.10 & 0.980 & 0.953 & 0.993 & 0.987 & 0.987 & 0.960 & 0.993 & 0.973 \\
    \bottomrule
    \end{tabular}
    }
    \label{tab:power}
\end{table}

\subsection{Out-of-distribution evaluation}\label{sec:dataexternal}
In this section, we investigate the performance the proposed detector on an external benchmark dataset, RAID\footnote{\url{https://huggingface.co/datasets/liamdugan/raid}} \citep{dugan2024raid}. This dataset is widely used for benchmarking LLM-generated text detectors in the literature. It differs from our collected dataset described in Section~\ref{sec:data} in three aspects. 
First, the human-written texts of the two datasets are different in content. 
Figure~\ref{fig:external} visualizes the distribution of word counts in human-written texts from RAID and our collected dataset. 
While the average text length in RAID is similar to ours (and slightly longer), its variability is substantially smaller. 
Second, the LLM-generated texts in RAID are not produced by the same set of language models used in our dataset. 
Finally, RAID additionally includes texts generated under 11 adversarial attack settings and 4 LLM decoding strategies \citep{dugan2024raid}.

\begin{figure}[t]
    \centering
    \includegraphics[width=0.5\linewidth]{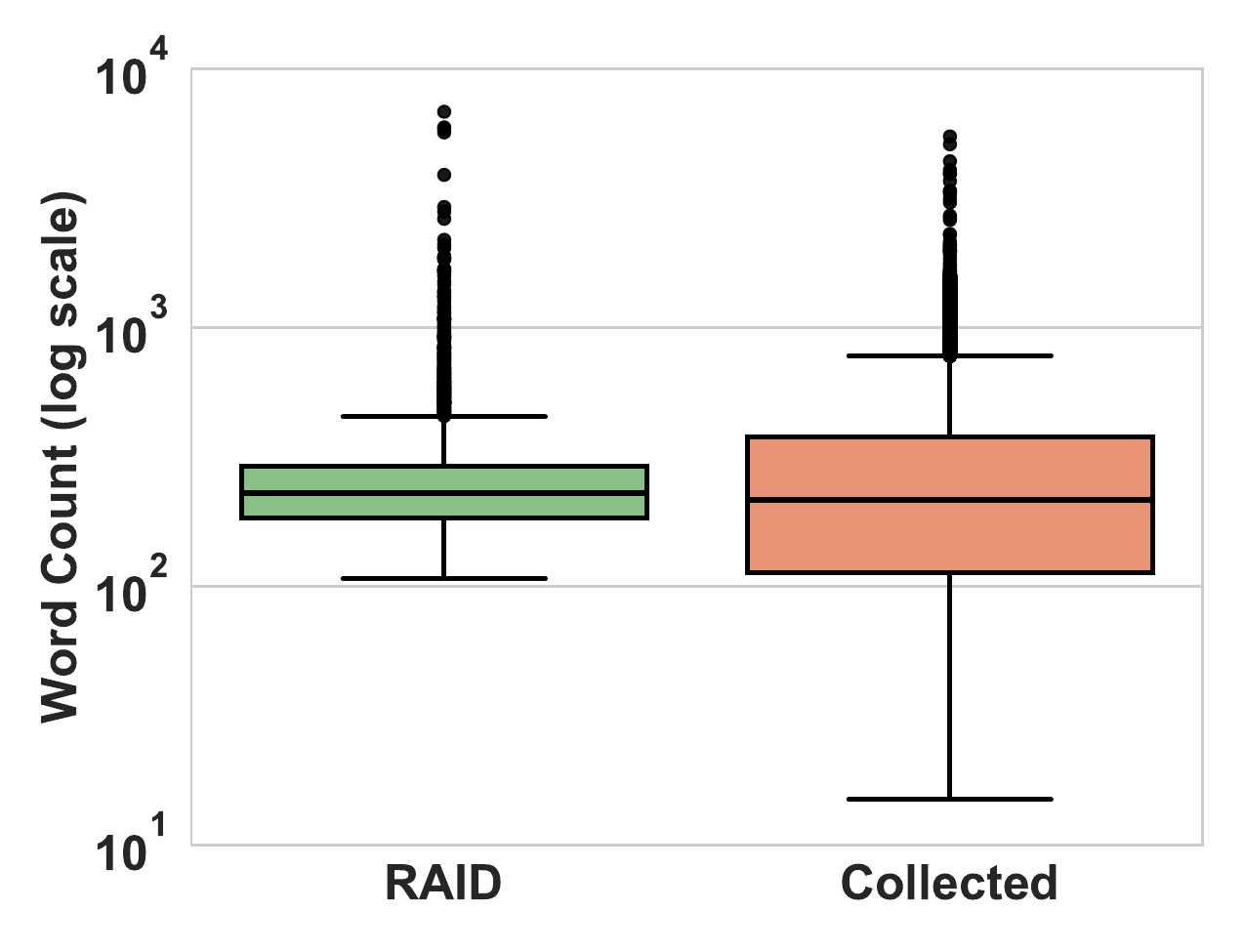}
    \vspace*{-10pt}
    \caption{Distribution of word counts for human-written texts in RAID and our collected dataset.}
    \label{fig:external}
\end{figure}

When training our detector for evaluation on RAID, we do not apply the sample splitting procedure described in Section~\ref{sec:experiments-collected}. 
Instead, we use all the data for detection. Table~\ref{tab:external} reports the empirical type-I error rate and power of our detector. As shown in the table, the type-I error rates are generally well controlled. 
For $\alpha = 0.01$ and $0.05$, the empirical type-I error is at or below the nominal significance level, with only a slight inflation when $\alpha = 0.1$. Additionally, the empirical power increases with $\alpha$, reaching nearly $80\%$ at $\alpha = 0.05$ and exceeding $90\%$ at $\alpha = 0.1$. These results are particularly impressive given that the LLM-generated texts in RAID are produced using different decoding temperatures than those used during training and under adversarial settings. They indicate that our procedure remains reliable for statistical inference under distribution shift, where the testing data differs from the training data.

\begin{table}
    \centering
    \caption{Empirical type-I error rate and power of the proposed detector on the 2000 human-written texts and 2000 GPT-4 generated texts in the RAID dataset.}\label{tab:external}
    {\footnotesize\begin{tabular}{l|ccc}
    \toprule
        $\alpha$ & 0.01 & 0.05 & 0.10 \\
    \midrule
        Type-I error rate & 0.007 & 0.050 & 0.127 \\
        Empirical power   & 0.450 & 0.780 & 0.906 \\
    \bottomrule
    \end{tabular}}
\end{table}

Finally, we compare our detector against the same set of baselines considered in Section~\ref{sec:experiments-collected} in terms of classification accuracy. Again, we use AUC as the evaluation criterion. The results are summarized in Table~\ref{tab:external-roc}. It can be seen that our detector still achieves the highest AUC, over 0.95. In contrast, the AUC of the best baseline detector reaches no higher than 0.88. Finally, it is worthwhile to note that although \textit{ImBD} performs desirably in Section \ref{sec:experiments-collected} when the training and test data follow the same distribution (Figure~\ref{fig:roc-nice}), its AUC drops substantially to below 0.8 on this external dataset. 
These results demonstrate that our approach is substantially more robust to distributional shifts between training and testing data than baseline detectors such as \textit{ImBD}.

\begin{table}[t]
\centering
\caption{AUCs of various detectors evaluated on RAID.}
\label{tab:external-roc}
\begin{tabular}{lccccc}
\toprule
 & AdaDetectGPT & Binoculars & Fast-DetectGPT & \textbf{Ours} & ImBD \\
\hline
AUC & 0.688 & 0.875 & 0.862 & \cellcolor{myblue}0.954 & 0.771 \\
\midrule
 & Likelihood & Logrank & LRR & RoBERTa & RADAR \\
\hline
AUC & 0.831 & 0.817 & 0.755 & 0.555 & 0.867 \\
\bottomrule
\end{tabular}
\end{table}

\subsection{Computational cost}\label{sec:computingtime}
We report the runtime (in seconds) and memory usage (in gigabytes) of the proposed detection procedure as a function of the number of tokens in the input text in Figure~\ref{fig:computation}. We make the following observations. First, the procedure typically completes within a few seconds. Meanwhile, the runtime increases approximately quadratically with the number of tokens, which is consistent with the computational complexity of Transformer-based architectures, where self-attention mechanisms compute pairwise interactions between all tokens in the input sequence \citep{vaswani2017attention}. Second, memory usage remains below 8~GB in most cases and grows approximately linearly with the number of tokens. (iii) The input length in our evaluation ranges from roughly 20 to over $2^{12}=4096$ tokens. Together, these results indicate that our method can comfortably handle both short and moderately long documents, such as short essays and reports, news headlines and articles on standard GPU hardware.

\begin{figure}[t]
    \centering
    \includegraphics[width=0.8\linewidth]{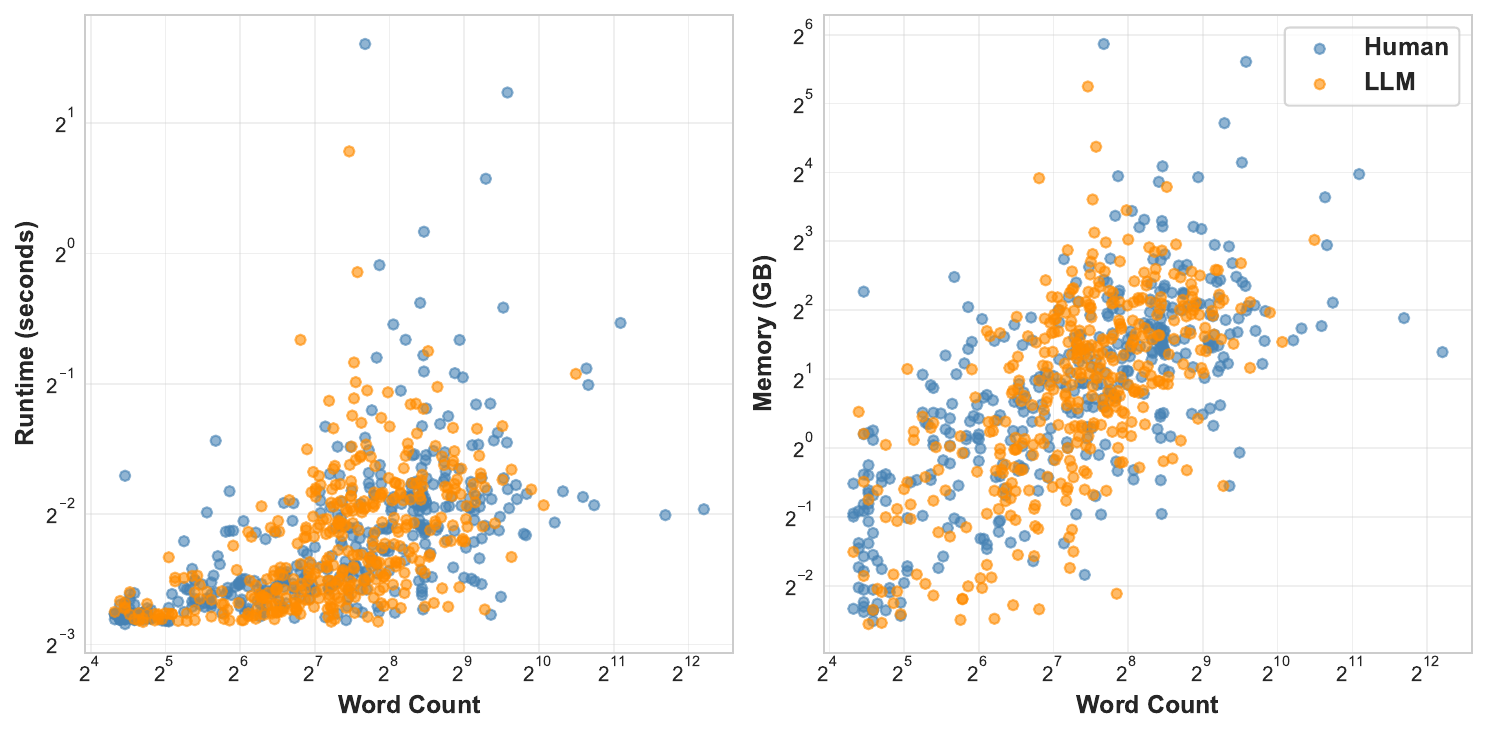}
    \caption{Left panel: the runtime of our method (on the $y$-axis) versus the number of tokens in the text to be detected (on the $x$-axis). Right panel: memory usage (on the $y$-axis) versus number of tokens. The runtime is measured in seconds, and memory usage in gigabytes (GB).}
    \label{fig:computation}
\end{figure} 

\section{Case studies}\label{sec:website}
We provide a publicly accessible website\footnote{\url{https://huggingface.co/spaces/stats-powered-ai/StatDetectLLM}}
 that hosts our trained detector. This serves two purposes. First, it allows users to apply our detector directly without downloading the model locally or retraining it using our data. Second, LLMs evolve and are updated rapidly. Detectors that are effective for LLMs released a few years ago may become outdated as those models are replaced or updated. By hosting our detector online, we plan to regularly update the set of popular LLMs used to generate training data and to retrain our detector accordingly, ensuring that it keeps pace with the rapid evolution of LLMs.

We next describe how to use our detector; a snapshot of our website is shown in Figure~\ref{fig:website}. The upper panel specifies the input provided by users and consists of the following components:
\begin{itemize}[leftmargin=*]
    \item \textbf{Input box} (grey): Users can paste the text to be detected here. 
    \item \textbf{Domain list} (grey): Users can specify the domain of the input text. The eight domains described in Section~\ref{sec:data} are available. If no such information is specified, the default option ``General'' is used, and our detector reports the maximum $p$-value across all eight domains to control the type-I error (see Section~\ref{sec:testing} for details).
    \item \textbf{Significance level $\alpha$ slider} (red): Users can adjust the slider to select an appropriate significance level between $0.01$ and $0.20$. If no value is specified, the default significance level is set to $0.05$.
    \item \textbf{Detect button} (orange): Clicking this button processes the input text, where our detector (i) computes the statistic in \eqref{eqn:ourstatistic}, (ii) uses the selected domain to calculate the corresponding $p$-value, and (iii) draws a conclusion based on the user-specified significance level.
\end{itemize}

\begin{figure}[t]
    \centering
    \includegraphics[width=0.9\linewidth]{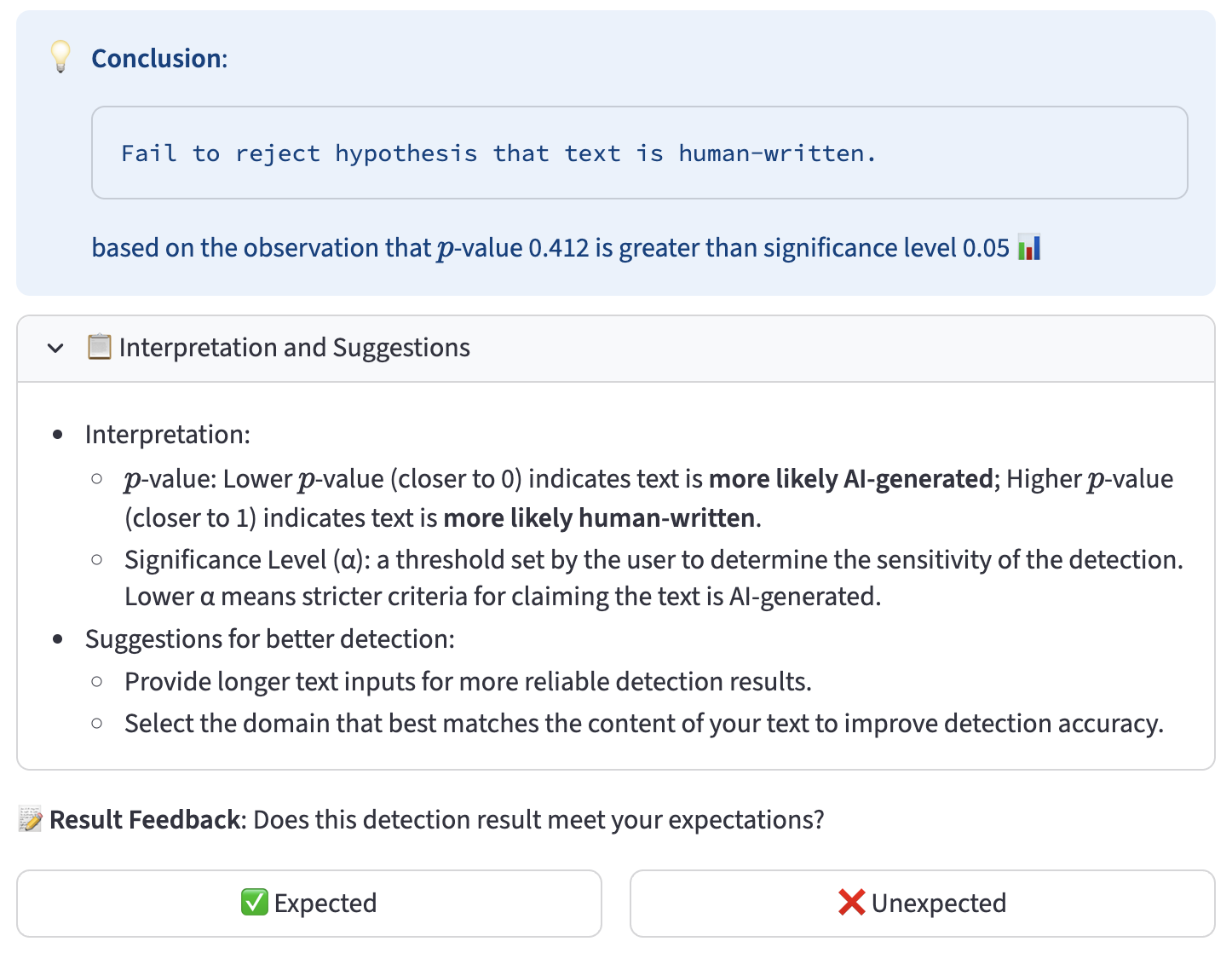}
    \caption{The outputs of detector is a conclusion panel indicating whether the text is human-written or LLM-generated,  along with the corresponding $p$-value.}\label{fig:website-bottom}
\end{figure}
After clicking the Detect button, the results are displayed (see Figure~\ref{fig:website-bottom} for an illustrative example). The results panel presents the conclusion produced by our detector:
\begin{itemize}[leftmargin=*]
    \item \textbf{Conclusion box} (blue): It displays the conclusion of our statistical hypothesis test, together with the rationale for this conclusion, by reporting the corresponding $p$-value computed from the empirical null distribution and the user-specified significance level $\alpha$.
    \item \textbf{Interpretation and Suggestions} (blue): Explains how to interpret the $p$-value and the meaning of the significance level $\alpha$ for users outside the statistics community. It also provides practical guidance on how to use the model.
    \item \textbf{User feedback panel} (white):  Allows users to indicate whether the detection result aligns with their expectations. This helps us collect feedback data for future improvement.
\end{itemize}
As a concrete example, we paste the abstract of \citet{li2021adversarially} into the input box in the upper panel of Figure~\ref{fig:website}. This paper was published in 2021, prior to the release of ChatGPT, and is therefore likely written by humans. Our detector reports a $p$-value of 0.412, and thus fails to reject the null hypothesis that the text is human-authored. 

Next, we prompt GPT-5 to rewrite the same abstract (see the prompt and generated text in Section~\ref{sec:website-experiments} of the Supplementary Material) and paste the rewritten version into the input box. As shown in the lower panel of Figure~\ref{fig:website}, the resulting $p$-value is very small (0.0000), leading to the rejection of the null hypothesis. This case study illustrates the effectiveness of our detector in distinguishing human-written text from LLM-generated text.

\begin{figure}[H]
    \centering
    \begin{minipage}{0.48\linewidth}
        \centering
        \includegraphics[width=\linewidth]{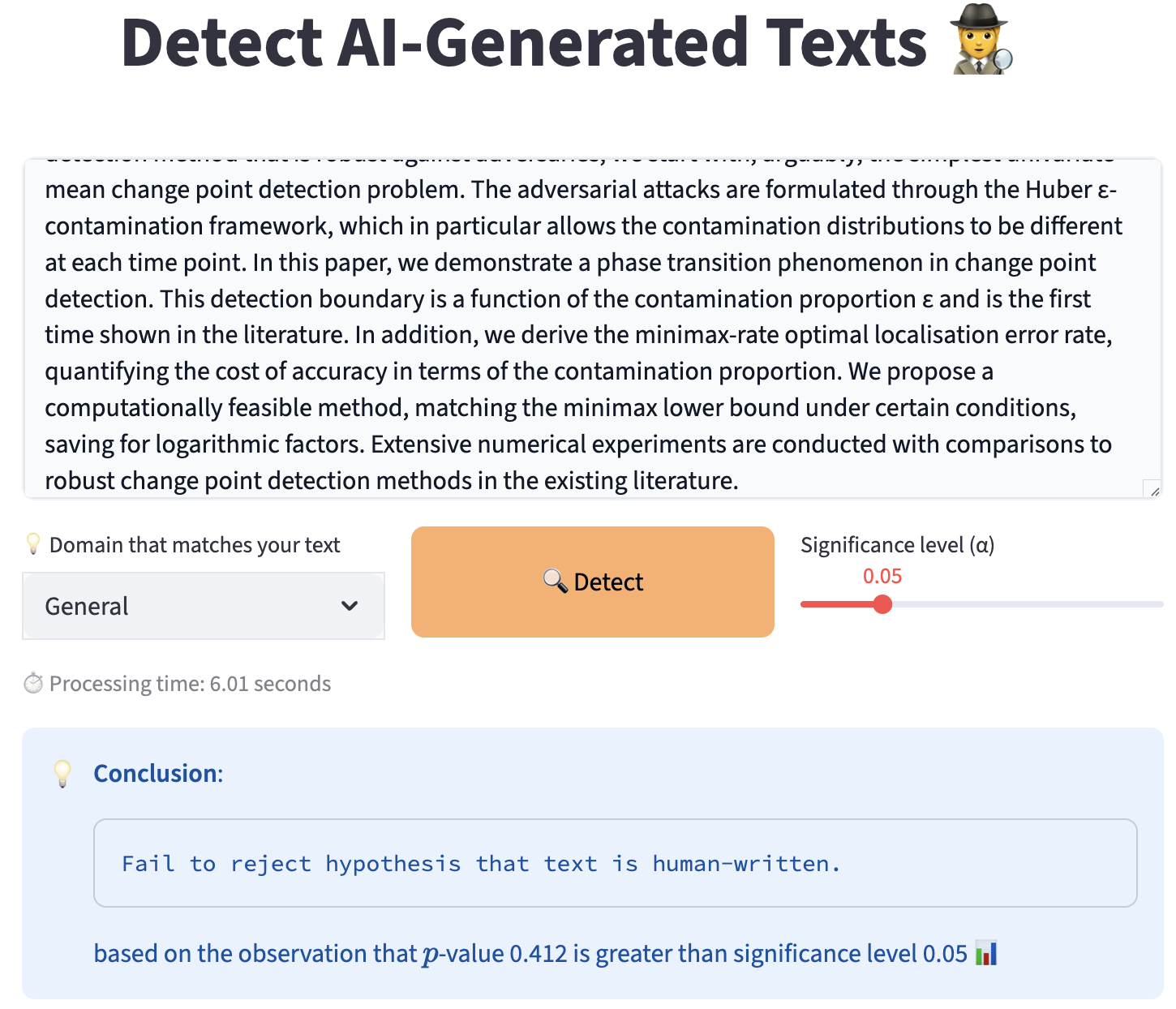}
        \caption*{}
    \end{minipage}
    \hfill
    \begin{minipage}{0.48\linewidth}
        \centering
        \includegraphics[width=\linewidth]{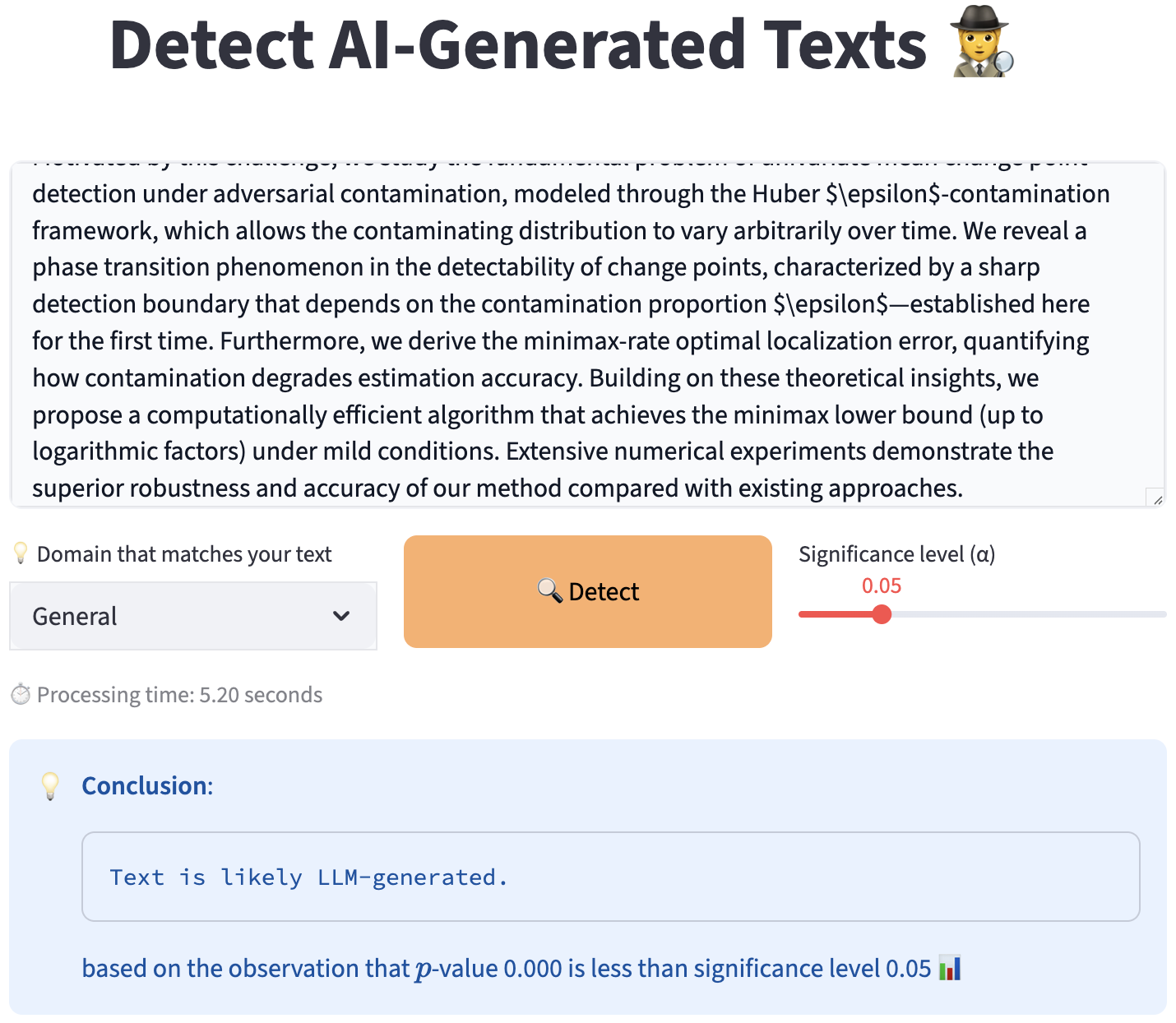}
        \caption*{}
    \end{minipage}
    \vspace{-50pt}
    \caption{Case study: detecting whether the abstract of the paper \citet{li2021adversarially} (left) and its rewritten version (right) are LLM-generated or not.}
    \label{fig:website-example}
    \vspace{-24pt}
\end{figure}

\section{Discussion}\label{sec:conclusion}
We develop a publicly accessible LLM detection tool in this paper. Compared to existing detectors in the literature, ours does not rely on watermarks or knowledge of the specific LLM used to generate the text. Furthermore, it demonstrates superior empirical performance in distinguishing between human and LLM-authored text while maintaining control over the type-I error. Given the rapid evolution of generative AI capabilities, ranging from texts to images and videos, extending our proposal to detect such AI-generated content remains a vital direction for future research.

\bibliography{bibliography.bib}



\phantomsection\label{supplementary-material}
\bigskip
\appendix

\renewcommand{\thesection}{\Alph{section}}

\section{Proofs}\label{sec:proof}
\subsection{Proof of Theorem \ref{thm:statdiff}}
    For any fixed $t$, Recall that $q_t^\mathcal{M}$ is defined as
    \begin{equation*}
        q_t^\mathcal{M}(x|\bm{x}_{<t}) = \frac{\exp\left(\frac{1}{\tau}\ell_t^\mathcal{M}(x|\bm{x}_{<t})\right)}{\sum_{x\in\mathcal{V}}\exp\left(\frac{1}{\tau}\ell_t^\mathcal{M}(x|\bm{x}_{<t})\right)}.
    \end{equation*}
    Suppose $\mathcal{A}_t(\bm{x}_{<t}) = \arg\max\limits_{v\in\mathcal{V}} \ell_t^\mathcal{M}(v|\bm{x}_{<t})$ be the set of tokens that maximizes $\ell_t^\mathcal{M}(\bullet|\bm{x}_{<t})$.
    It follows that as $\tau \to 0$, for any $x \in\mathcal{V}$,  $q_t^\mathcal{M}(x|\bm{x}_{<t}) \to \frac{1}{|\mathcal{A}_t|}\mathbb{I}\{x \in \mathcal{A}_t(\bm{x}_{<t})\}$.
    When $\mathcal{S}=\mathcal{M}$, it follows that for all $t\geq 1$ and for any distribution $\mathbb{P}$,
\begin{equation}
     \mathbb{E}_{\bm{X}\sim \mathbb{P}}\ell_t^{\mathcal{M}}(X_t| \bm{X}_{<t}) \leq \mathbb{E}_{\bm{X}\sim \mathbb{P}}\{ \max_{v\in\mathcal{V}} \ell_t^{\mathcal{M}}(v| \bm{X}_{<t}) \}  =  \mathbb{E}_{\substack{\bm{X}_{<t}\sim \mathbb{P}\\\widetilde{X}_t \sim q_t^{\mathcal{M}}(\bullet | \bm{X}_{<t})}} [\ell_t^{\mathcal{M}}(\widetilde{X}_t | \bm{X}_{<t})].
\end{equation}
Therefore,
\begin{equation}
    \sum_t \mathbb{E}_{\bm{X}\sim \mathbb{P}}\ell_t^{\mathcal{M}}(X_t| \bm{X}_{<t})-\sum_t \mathbb{E}_{\substack{\bm{X}_{<t}\sim \mathbb{P}\\\widetilde{X}_t \sim q_t^{\mathcal{M}}(\bullet | \bm{X}_{<t})}} [\ell_t^{\mathcal{M}}(\widetilde{X}_t | \bm{X}_{<t})]\le 0.
\end{equation}
This indicates that $\mathbb{E}_{\bm{X} \sim \mathbb{P}} S_{\textrm{Fast}}(\bm{X})\leq 0$. On the other hand, as temperature $\tau \to 0$, if $\mathbf{X}\sim \mathbb{Q}^\mathcal{M}$,
\begin{equation*}
    \ell_t^{\mathcal{M}}(X_t| \bm{X}_{<t}) - \mathbb{E}_{\substack{\bm{X}_{<t}\sim \mathbb{Q}^\mathcal{M}\\\widetilde{X}_t \sim q_t^{\mathcal{M}}(\bullet | \bm{X}_{<t})}} [\ell_t^{\mathcal{M}}(\widetilde{X}_t | \bm{X}_{<t})]= 0
\end{equation*}
almost surely, which indicates $\mathbb{E}_{\bm{X} \sim \mathbb{P}} S_{\textrm{Fast}}(\bm{X}) =0$. This finishes the proof.

\subsection{Proof of Theorem \ref{thmLw}}

We first introduce the technical conditions needed for proving Theorem \ref{thmLw}.
\begin{assumption}[Equal variance]\label{ass:equal-variance}
    For any non-constant witness function $w$, define 
    \begin{align*}
        \sigma_{q,L}^2 \coloneqq \frac{1}{L} \sum_{t=1}^L \textup{Var}_{\widetilde{X}_t\sim q_t}\left( w_t(\widetilde{X}_t|\widetilde{X}_{<t}) \right), \quad
        \sigma_{p,L}^2 \coloneqq \frac{1}{L} \sum_{t=1}^L \textup{Var}_{\widetilde{X}_t\sim p_t}\left( w_t(\widetilde{X}_t|\widetilde{X}_{<t}) \right).
    \end{align*}
    $\sigma_{q,L}^2, \sigma_{p,L}^2 $ are lower bounded by some constant $\sigma_w^2 >0$ almost surely. Moreover, $\sigma_{q,L} / \sigma_{p,L} \to 1 $ in probability as $L \to \infty$.
\end{assumption}

\begin{assumption}\label{ass:variance-ratio}
    For any witness function $w$, define
    \begin{align*}
        \bar\sigma_{q,L}^2 = \frac{1}{L} \sum_{t=1}^L \textup{Var}_{\bm{X}\sim q}\left( w_t(\widetilde{X}_t|\widetilde{X}_{<t}) \right), \quad
        \bar\sigma_{p,L}^2 = \frac{1}{L} \sum_{t=1}^L \textup{Var}_{\bm{X}\sim p}\left( w_t(\widetilde{X}_t|\widetilde{X}_{<t}) \right) .
    \end{align*}
    If $\bm{X}\sim q$, then $\bar\sigma_{q,L}^2/\sigma_{q,L}^2 \to 1$ in probability. If $\bm{X}\sim p$, then $\bar\sigma_{p,L}^2/\sigma_{p,L}^2 \to 1$ in probability.
\end{assumption}

\begin{lemma}\label{thm: convergence-rate-MCLT}
    Let $\bm{X} = (X_1,\ldots X_n)$ be sequences of real valued random variables satisfying for all $1\leq t\leq n$, 
    \begin{equation*}
        \mathbb{E} (X_t | X_{<t}) = 0\quad \textup{almost surely.}
    \end{equation*}
    Let $\sigma_t^2 = \mathbb{E} (X_t^2\big| X_{<t})$, $\bar\sigma_t^2 = \mathbb{E}(X_t^2)$, $s_n^2 = \sum_{t=1}^n \bar\sigma_t^2$ and $V_n^2 = \sum_{t=1}^n \sigma_t^2/s_n^2$. Suppose $|X_n|$ is bounded by some constant almost surely for all $n$ and $s_n/\sqrt{n}$ is bounded away from zero and $V_n^2\to 1$ in $L^1$. Then 
    \begin{equation*}
        \sup_{z\in \mathbb{R}} \left| \mathbb{P}\left( \frac{\sum_{t=1}^n X_t}{\sqrt{\sum_{t=1}^n \sigma_t^2}} \leq z \right) - \Phi(z) \right| \to 0,
    \end{equation*}
    where $\Phi(\bullet)$ is the cumulative distribution function of standard normal distribution.
\end{lemma}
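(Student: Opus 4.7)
The plan is to reduce this uniform CDF convergence to a standard weak-limit statement via three textbook steps: (i) a classical martingale central limit theorem for the sum normalized by the deterministic quantity $s_n$, (ii) a Slutsky transfer to the self-normalized statistic $S_n/\sqrt{\sum_t \sigma_t^2}$, and (iii) an application of P\'olya's theorem to upgrade pointwise CDF convergence to uniform convergence, using the continuity of the standard normal CDF $\Phi$.

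Set $S_n \coloneqq \sum_{t=1}^n X_t$ and $\mathcal{F}_{t} \coloneqq \sigma(X_1,\dots,X_{t})$; the hypothesis $\mathbb{E}(X_t\mid X_{<t}) = 0$ makes $(X_t,\mathcal{F}_t)$ a martingale difference sequence. First I would invoke the martingale CLT of \citet{hall2014martingale} (their Theorem 3.2) applied to $S_n/s_n$, which requires the conditional Lindeberg condition together with $V_n^2 \to 1$ in probability. The latter is immediate, since $V_n^2 \to 1$ in $L^1$ implies convergence in probability. For the former, the boundedness $|X_t|\le M$ almost surely combined with $s_n \ge c\sqrt{n}$ forces the indicator $\mathbb{I}(|X_t|>\varepsilon s_n)$ to vanish identically for every $t$ whenever $n \ge M^2/(\varepsilon c)^2$, so the conditional Lindeberg sum is exactly zero eventually. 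Hence $S_n/s_n \Rightarrow \mathcal{N}(0,1)$.

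Next I would handle the random denominator. Since $V_n^2 \to 1$ in $L^1$ implies $V_n \to 1$ in probability (via $L^1$ convergence, followed by the continuous mapping theorem for the positive square root), and
\[
    \frac{S_n}{\sqrt{\sum_{t=1}^n \sigma_t^2}} = \frac{S_n/s_n}{V_n},
\]
Slutsky's lemma yields $S_n/\sqrt{\sum_{t=1}^n \sigma_t^2} \Rightarrow \mathcal{N}(0,1)$. Because the limit law $\Phi$ is continuous on $\mathbb{R}$, P\'olya's theorem promotes pointwise CDF convergence to uniform convergence in the Kolmogorov metric, which is exactly the displayed claim.

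The main subtlety is the self-normalization step: the denominator $\sqrt{\sum_t \sigma_t^2}$ depends on the same sample path that drives $S_n$, so one must ensure the ratio $V_n$ concentrates tightly enough around $1$ for Slutsky to apply at the level of the ratio, not merely the squared ratio. The $L^1$ convergence of $V_n^2$, together with the lower bound $s_n/\sqrt{n}\ge c$, is precisely what delivers this concentration, and simultaneously renders the conditional Lindeberg condition vacuous. Everything beyond these two checks is a routine assembly of the MCLT, Slutsky's lemma, and P\'olya's theorem.
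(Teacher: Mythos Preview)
Your argument is correct and follows essentially the same route as the paper, which simply cites a martingale central limit theorem (Corollary~1 of Bolthausen, 1982) without further detail. Your version is in fact more explicit: you verify the conditional Lindeberg condition via boundedness and the lower bound on $s_n/\sqrt{n}$, transfer from the deterministic normalizer $s_n$ to the random normalizer $\sqrt{\sum_t \sigma_t^2}$ via Slutsky, and then upgrade to uniform convergence via P\'olya's theorem---steps the paper leaves implicit in its one-line citation.
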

\begin{proof}
The conclusion directly follows from martingale central limit theorem, see e.g. Corollary 1 of \cite{Bolthausen_1982_exact}.
\end{proof}

\begin{lemma}\label{lem:Taylorexpansion-of-normal-cdf}
    Suppose $X$ is a random variable. Let $\Phi$ and $\phi$ be the cumulative distribution function and probability density function of standard normal distribution. Then for any random variable $X$, 
    \begin{eqnarray}
        \mathbb{E}\Phi(z_\alpha +X) \geq \min\{ 1-\alpha, \alpha + \Phi'(z_\alpha)\mathbb{E}X\}, \nonumber
    \end{eqnarray}
    where $0<\alpha<1/2$, $z_\alpha$ is the $\alpha$-th quantile of standard normal distribution.
\end{lemma}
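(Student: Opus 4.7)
The plan is to reduce the expectation inequality to a pointwise inequality about $\Phi$ near $z_\alpha$, and then pass from pointwise to expectation via a case analysis. Since $0<\alpha<1/2$ forces $z_\alpha<0$, set $x_\star:=-2z_\alpha>0$; this is the natural breakpoint because the symmetry $\phi(-y)=\phi(y)$ makes $\phi(z_\alpha+x_\star)=\phi(z_\alpha)$, so the tangent line to $\Phi$ at $z_\alpha$ has a second ``contact'' with $\Phi$ at $-z_\alpha$.

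For the pointwise step I would establish that for every $x\in\mathbb R$,
\[
\Phi(z_\alpha+x)\ \ge\ \min\{1-\alpha,\ \alpha+\phi(z_\alpha)x\},
\]
by a two-region split at $x_\star$. For $x\ge x_\star$, monotonicity of $\Phi$ gives $\Phi(z_\alpha+x)\ge\Phi(-z_\alpha)=1-\alpha$, which already dominates both arguments of the min. For $x\le x_\star$, define $D(x):=\Phi(z_\alpha+x)-\alpha-\phi(z_\alpha)x$, so $D'(x)=\phi(z_\alpha+x)-\phi(z_\alpha)$; by symmetry of $\phi$ about $0$, $D'\le 0$ on $(-\infty,0]$ (where $|z_\alpha+x|\ge|z_\alpha|$) and $D'\ge 0$ on $[0,x_\star]$ (where $|z_\alpha+x|\le|z_\alpha|$), so $D$ attains its minimum at $x=0$ with $D(0)=0$. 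This yields the tangent-line bound $\Phi(z_\alpha+x)\ge \alpha+\phi(z_\alpha)x$ on $(-\infty,x_\star]$. Consistency of the two regions requires $x_\star\le (1-2\alpha)/\phi(z_\alpha)$, equivalently $f(\alpha):=1-2\alpha+2z_\alpha\phi(z_\alpha)\ge 0$ on $(0,1/2)$, which follows from $f(1/2)=0$ and $f'(\alpha)=-2z_\alpha^2\le 0$.

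The main obstacle is the passage from the pointwise bound to the expectation bound. Taking expectations yields $\mathbb E\Phi(z_\alpha+X)\ge \mathbb E\min\{1-\alpha,\alpha+\phi(z_\alpha)X\}$, but the target right-hand side involves the min of an expectation rather than the expectation of a min; since $t\mapsto\min\{1-\alpha,\alpha+\phi(z_\alpha)t\}$ is concave, a direct invocation of Jensen's inequality runs in the wrong direction. To close the chain I would abandon the single pointwise envelope and instead use the two region-specific bounds separately: the tangent-line bound on $\{X\le x_\star\}$ and the sharper floor $\Phi(z_\alpha+X)\ge 1-\alpha$ on $\{X>x_\star\}$. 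This decomposes $\mathbb E\Phi(z_\alpha+X)$ into an affine-in-$X$ contribution restricted to $\{X\le x_\star\}$ plus a mass-weighted constant term $(1-\alpha)\mathbb P(X>x_\star)$, which I would then compare against $\min\{1-\alpha,\alpha+\phi(z_\alpha)\mathbb EX\}$ by splitting on whether $\alpha+\phi(z_\alpha)\mathbb EX$ falls below or above $1-\alpha$: in the affine regime the tangent-line term combined with the nonnegative floor contribution delivers $\alpha+\phi(z_\alpha)\mathbb EX$, while in the constant regime one must argue that a large value of $\mathbb EX$ forces sufficient mass on $\{X>x_\star\}$ so that the floor term alone yields $1-\alpha$. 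The delicate bookkeeping at the transition between these two regimes is where the crux of the proof lies.
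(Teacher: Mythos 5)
Your pointwise analysis is fine: the tangent-line bound $\Phi(z_\alpha+x)\ge \alpha+\phi(z_\alpha)x$ on $(-\infty,-2z_\alpha]$, the floor $\Phi(z_\alpha+x)\ge 1-\alpha$ beyond $x_\star=-2z_\alpha$, and the check $1-2\alpha+2z_\alpha\phi(z_\alpha)\ge 0$ are all correct, so $\Phi(z_\alpha+x)\ge\min\{1-\alpha,\alpha+\phi(z_\alpha)x\}$ holds for every real $x$. But the step you defer as ``delicate bookkeeping'' is not delicate --- it is impossible, because the lemma as stated for an arbitrary random variable $X$ is false. Conceptually, $h(x)=\Phi(z_\alpha+x)$ is increasing and bounded by $1$, so its largest convex minorant on $\mathbb{R}$ is identically $0$; hence for any fixed mean $\mu$ one can make $\mathbb{E}h(X)$ arbitrarily small, and no strictly positive lower bound of the form $g(\mathbb{E}X)$ can hold uniformly. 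Concretely, take $\alpha=0.05$ (so $z_\alpha\approx-1.645$, $\phi(z_\alpha)\approx 0.103$) and $X=0$ with probability $0.9$, $X=10^4$ with probability $0.1$. Then $\mathbb{E}X=10^3$, so the right-hand side saturates at $1-\alpha=0.95$, while $\mathbb{E}\Phi(z_\alpha+X)\approx 0.9\,\alpha+0.1=0.145$. This kills precisely the claim your final step needs: a large $\mathbb{E}X$ does \emph{not} force appreciable mass on $\{X>x_\star\}$, since the mean can be inflated by a tiny-probability far-right atom while most of the mass sits where $\Phi(z_\alpha+X)\approx\alpha$ (note the example even has $X\ge 0$, so sign restrictions do not rescue it). Your own Jensen observation --- that $t\mapsto\min\{1-\alpha,\alpha+\phi(z_\alpha)t\}$ is concave, so $\mathbb{E}\min\le\min\mathbb{E}$ --- is exactly the mechanism of the failure, not merely an obstacle to one proof strategy.

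For comparison, the paper does not actually prove this lemma: it cites Lemma S2 of \citet{zhou2025adadetectgpt}, and the restatement here has evidently dropped the hypotheses under which that result holds and is applied. In the paper's own use, the role of $X$ is played by an asymptotically degenerate quantity (a ratio-corrected version of $S^{(2)}(\bm{X})$ whose fluctuations vanish as $L\to\infty$), and under such a concentration/boundedness condition your pointwise envelope immediately yields the expectation bound up to $o(1)$, since then $\mathbb{E}\min\{1-\alpha,\alpha+\phi(z_\alpha)X\}$ and $\min\{1-\alpha,\alpha+\phi(z_\alpha)\mathbb{E}X\}$ agree asymptotically. So the constructive way to salvage your argument is to add an explicit hypothesis of that type (e.g.\ $X$ bounded, or $X-\mathbb{E}X\to 0$ in probability with uniform integrability) rather than to attempt the unconditional statement, which no bookkeeping can establish.
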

\begin{proof}
    The proof directly follows from Lemma S2 in \cite{zhou2025adadetectgpt}.
\end{proof} 

Now, we proceed to prove Theorem~\ref{thmLw}. Noted that our test statistics can be decomposed as $S(\bm{X})=S^{(1)}(\bm{X})-S^{(2)}(\bm{X})$ with $S^{(1)}(\bm{X}),S^{(2)}(\bm{X})$ defined by
\begin{eqnarray}\label{eq:Tw-decomposition}
   S^{(1)}(\bm{X}) &=&\frac{\sum_t [w_t(X_t|X_{<t})-\mathbb{E}_{\widetilde{X}_t \sim p_t} w_t(\widetilde{X}_t|X_{<t})]}{\sqrt{\sum_t \textrm{Var}_{\widetilde{X}_t \sim q_t} (w_t(\widetilde{X}_t|X_{<t}))}}\nonumber\\
    S^{(2)}(\bm{X})&=&\frac{\sum_t [\mathbb{E}_{\widetilde{X}_t \sim q_t} w_t(\widetilde{X}_t|X_{<t})-\mathbb{E}_{\widetilde{X}_t \sim p_t} w_t(\widetilde{X}_t|X_{<t})]}{\sqrt{\sum_t \textrm{Var}_{\widetilde{X}_t \sim q_t} (w_t(\widetilde{X}_t|X_{<t}))}},
\end{eqnarray}
The TNR can be represented as
\begin{eqnarray}
    \mathbb{P}_{\bm{X}\sim p}\left(S(\bm{X}) \leq z_\alpha \right) &=& \mathbb{P}_{\bm{X}\sim p}\left( S^{(1)}(\bm{X}) \leq z_\alpha + S^{(2)}(\bm{X}) \right)
\end{eqnarray}
It is easy to verify that when $\bm{X} \sim p$, $S^{(1)}(\bm{X}) \sigma_{q,L}/\sigma_{p,L}$ converges to standard normal distribution. Specifically, using Lemma \ref{thm: convergence-rate-MCLT}, we obtain that 
\begin{eqnarray}
    \mathbb{P}_{\bm{X}\sim p}\left( S(\bm{X}) \leq z_\alpha \right)  &=& \mathbb{P}_{\bm{X}\sim p}\left( S^{(1)}(\bm{X})\frac{\sigma_{q,L}}{\sigma_{p,L}} \leq (z_\alpha + S^{(2)}
    (\bm{X}))\frac{\sigma_{q,L}}{\sigma_{p,L}} \right)\nonumber\\
    &\geq& \Phi(z_\alpha + S^{(2)}(\bm{X})) + \left(\Phi\left((z_\alpha + S^{(2)}
    (\bm{X}))\frac{\sigma_{q,L}}{\sigma_{p,L}}\right) - \Phi(z_\alpha + S^{(2)}(\bm{X})) \right)\nonumber\\
    &&\qquad +o_p(1)\nonumber\\
    &\geq& \Phi(z_\alpha + S^{(2)}(\bm{X})) - \sup_{z\in\mathbb{R}} |\phi(z)| \times \left| z_\alpha + S^{(2)}(\bm{X}) \right| \times \left| \frac{\sigma_{q,L}}{\sigma_{p,L}}-1 \right|+o_p(1) \nonumber
\end{eqnarray}
Under Assumption \ref{ass:equal-variance}, $\sigma_{q,L} / \sigma_{p,L} \to 1$ in probability, we obtain
\begin{equation*}
    \mathbb{P}_{\bm{X}\sim p}\left( S(\bm{X}) \leq z_\alpha \right) \geq \Phi(z_\alpha + S^{(2)}(\bm{X})) + o_p(1).
\end{equation*}
Moreover, the remainder term $o_p(1)$ is uniformly integrable since $\mathbb{P}_{\bm{X}\sim p}\left( S(\bm{X})\right) $ and $ \Phi(z_\alpha + S^{(2)}(\bm{X})) $ are all bounded above. Take expectation on both sides, we have by Assumption~\ref{ass:equal-variance} that
\begin{eqnarray}
    \mathbb{P}_{\bm{X}\sim p}\left( S(\bm{X}) \leq z_\alpha \right) \geq \mathbb{E}\Phi(z_\alpha + S^{(2)}(\bm{X})) + o(1).\nonumber
\end{eqnarray}
Next, define $\widetilde\sigma_{q,L}^2 = \mathbb{E}_{\bm{X}\sim p}\sigma_{q,L}^2$. It follows that $L_w= \mathbb{E}\left\{S^{(2)}(\bm{X})\frac{\sigma_{q,L}}{\widetilde{\sigma}_{q,L}}\right\}$. Under the equal variance assumption in Assumption \ref{ass:equal-variance}, we also have $ \sigma_{q,L} - \widetilde\sigma_{q,L} \to 0$ in probability. It follows that for any $\epsilon >0$,
\begin{eqnarray}
    &&\mathbb{E}\Phi(z_\alpha + S^{(2)}(\bm{X}))  
    \\
    &=&\mathbb{E}\Phi(z_\alpha + S^{(2)}(\bm{X}))\mathbb{I}\{|\sigma_{q,L} - \widetilde\sigma_{q,L}| \leq\epsilon\} \nonumber\\
    &&\qquad + \mathbb{E}\Phi(z_\alpha + S^{(2)}(\bm{X}))\mathbb{I}\{|\sigma_{q,L} - \widetilde\sigma_{q,L}| >\epsilon\} \nonumber\\
    &\geq& \mathbb{E}\Phi(z_\alpha + S^{(2)}(\bm{X}))\mathbb{I}\{|\sigma_{q,L} - \widetilde\sigma_{q,L}| \leq\epsilon\} \nonumber\\
    &\geq & \mathbb{E}\Phi\left(z_\alpha + S^{(2)}(\bm{X})\frac{\sigma_{q,L}}{\widetilde\sigma_{q,L} + \text{sgn}(S^{(2)}(\bm{X}))\epsilon}\right)\mathbb{I}\{|\sigma_{q,L} - \widetilde\sigma_{q,L}| \leq\epsilon\} \nonumber\\
    &\geq& \mathbb{E}\Phi\left(z_\alpha + S^{(2)}(\bm{X})\frac{\sigma_{q,L}}{\widetilde\sigma_{q,L} + \text{sgn}(S^{(2)}(\bm{X}))\epsilon}\right) \nonumber\\
    &&\qquad -  \mathbb{E}\Phi\left((z_\alpha + S^{(2)}(\bm{X}))\frac{\sigma_{q,L}}{\widetilde\sigma_{q,L} + \text{sgn}(S^{(2)}(\bm{X}))\epsilon}\right)\mathbb{I}\{|\sigma_{q,L}- \widetilde\sigma_{q,L}| >\epsilon \} \nonumber\\
    &\geq& \mathbb{E}\Phi\left((z_\alpha +S^{(2)}(\bm{X}))\frac{\sigma_{q,L}}{\widetilde\sigma_{q,L} +\text{sgn}(S^{(2)}(\bm{X})) \epsilon}\right) - \mathbb{P}(|\sigma_{q,L} - \widetilde\sigma_{q,L}| >\epsilon), \nonumber
\end{eqnarray}
where the first inequality is obtained due to $\Phi$ is non-negative and the second inequality holds due to the monotonicity and boundedness of $\Phi$.  Together with Lemma \ref{lem:Taylorexpansion-of-normal-cdf} and Assumption \ref{ass:equal-variance}, we obtain
\begin{equation}\label{eqn:thm1-final-ineq}
\begin{split}
    \mathbb{P}_{\bm{X}\sim p}\left( S(\bm{X}) \leq z_\alpha \right)
    \geq&\min\left\{1-\alpha, \alpha + \phi(z_\alpha)\mathbb{E}\left\{S^{(2)}(\bm{X})\frac{\sigma_{q,L}}{\widetilde{\sigma}_{q,L}}\right\} \right\} \frac{\widetilde{\sigma}_{q,L}}{\widetilde{\sigma}_{q,L} + \text{sgn}(S^{(2)}(\bm{X}))\epsilon} \\
    &-\mathbb{P}\{|\sigma_{q,L} - \widetilde\sigma_{q,L}| \geq\epsilon\}  + o(1).
\end{split}
\end{equation}
Let $L\to \infty$ and using the fact that $\mathbb{E}\left\{S^{(2)}(\bm{X})\frac{\sigma_{q,L}}{\widetilde{\sigma}_{q,L}}\right\}=  L_w$, we obtain that TNR is asymptotically lower bounded by $\min\{1-\alpha, \alpha + \phi(z_\alpha)L_w) \}\frac{\widetilde{\sigma}_{q,L}}{\widetilde{\sigma}_{q,L} + \text{sgn}(S^{(2)})\epsilon}$. By taking $\epsilon \to 0$, then the conclusion of Theorem \ref{thmLw} follows.

\subsection{Proof of Theorem \ref{thm:typeI}}
By Glivenko-Cantelli Theorem,
\begin{equation*}
    \sup_{s\in\mathbb{R}}\left|\frac{1+\sum_{j=1}^m\mathbb{I}\{s < S(\bm{X}_j)\}}{m+1} - (1 - F(s))\right| \to 0
\end{equation*}
in probability, where $F$ is the cumulative distribution function of $S(\bm{X}_j)$. Noted that under null hypothesis, $S(\bm{X})$ follows distribution $F$. Therefore, $F(S(\bm{X}))$ exactly follows uniform distribution $U(0,1)$. Consequently, for any $\alpha\in(0,1)$
\begin{eqnarray}
    \mathbb{P}_{\bm{X}\sim \mathbb{P}}\left(\text{p-value} \leq \alpha\right)= \mathbb{P}_{X\sim \mathbb{P}}\left( 1-F(S(\bm{X})) \leq \alpha\right)+ o_p(1) \to \alpha
\end{eqnarray}
as $m\to \infty$. This finishes the proof of Theorem \ref{thm:typeI}.

\section{Data}

\subsection{Human Text: Source and Details on Processing}\label{sec:data-human}

\textbf{Human text source: details}. we describe the data resource used in this paper:
\begin{itemize}[leftmargin=*]
    \item Academia: the paper in PubMed\footnote{\url{https://github.com/armancohan/long-summarization?tab=readme-ov-file}} and the abstract in arXiv\footnote{\url{https://www.kaggle.com/datasets/Cornell-University/arxiv}}.
    \item Finance: the financial conversations\footnote{\url{https://huggingface.co/datasets/ceadar-ie/FinTalk-19k}}, opinion-based answering on the question in finance\footnote{\url{https://huggingface.co/datasets/LLukas22/fiqa}}, and the financial news\footnote{\url{https://huggingface.co/datasets/danidanou/Bloomberg_Financial_News}}
    \item Knowledge: the cleaned Wikipedia articles before March 2022\footnote{\url{https://huggingface.co/datasets/legacy-datasets/wikipedia}}
    \item Government: the U.S. government report dataset\footnote{\url{https://huggingface.co/datasets/launch/gov_report}} and its corresponding summarization written by human expert\footnote{\url{https://huggingface.co/datasets/ccdv/govreport-summarization}}
    \item Legislation: case holdings on legal decisions on the US court cases\footnote{\url{https://huggingface.co/datasets/coastalcph/lex_glue}} and the law of European Union written in English\footnote{\url{https://huggingface.co/datasets/coastalcph/multi_eurlex}}
    \item Medicine: description on patients\footnote{\url{https://huggingface.co/datasets/ncbi/Open-Patients}} and expert's explanation on the answer about the benchmarked medical questions\footnote{\url{https://huggingface.co/datasets/openlifescienceai/medmcqa}}.
    \item News: the news article from BBC\footnote{\url{https://huggingface.co/datasets/SetFit/bbc-news}}\footnote{\url{https://huggingface.co/datasets/EdinburghNLP/xsum}} and CNN\footnote{\url{https://huggingface.co/datasets/AyoubChLin/CNN_News_Articles_2011-2022}}.
    \item Users review: food review on Amazon\footnote{\url{https://www.kaggle.com/datasets/snap/amazon-fine-food-reviews}}, product review on Yelp\footnote{\url{https://huggingface.co/datasets/Yelp/yelp_review_full/}}, and movie review on ImDB\footnote{\url{https://huggingface.co/datasets/stanfordnlp/imdb}}. 
\end{itemize}

\textbf{Details of Text Pre-processing}. To ensure the high quality of the collected texts, we discarded those containing no more than 20 words, as both theoretical and empirical findings \citep{bao2024fastdetectgpt, zhou2025adadetectgpt} suggest that very short texts provide limited information for determining whether they are LLM-generated. We also avoided overly long texts, as they impose significant computational burdens during fine-tuning due to the quadratic runtime of the classical Transformer architecture. To handle extremely long texts (e.g., certain entries from Wikipedia), we randomly selected 7–8 consecutive sentences from the original content instead. In addition, we eliminated texts containing a substantial amount of repetition. Specifically, we computed the 3-gram statistics for each text, which ranges from 0 to 1, where larger values indicate more repetition. Following the comments in \citet{huang2025attributing}, we set the threshold for the 3-gram statistics to 0.4.

\subsection{Machine data generation}\label{sec:data-machine}

The procedure for generating LLM-generated texts in each category is as follows. First, for each text category, we randomly sampled 225 texts. Then, for each selected text, we randomly chose one LLM model from \texttt{grok-3-mini}, \texttt{gemini-2.5-flash}, and \texttt{gpt-4o} to generate a corresponding text. In addition, we randomly selected one prompt from a pool of more than 20 candidate prompts, designed to capture diverse linguistic styles of different models. Typical examples include:

\begin{tcolorbox}[colback=white,colframe=black!50,title=Prompt for generating LLM texts]
Expand but not extend the paragraph in a persuasive style.
\end{tcolorbox}

We also added necessary system prompts to ensure that the LLMs do not produce unnecessary text. The specific system prompts for the rewrite, polish, and expand tasks are listed below.

\begin{tcolorbox}[colback=white,colframe=black!50,title=System prompt for the rewriting task]
You are a professional rewriting expert and you can help paraphrasing this paragraph in English without missing the original details. Please keep the length of the rewritten text similar to the original text. Return ONLY the rewritten version. Do not explain changes, do not give multiple options, and do not add commentary.
\end{tcolorbox}

\begin{tcolorbox}[colback=white,colframe=black!50,title=System prompt for the polish task]
    You are a professional polishing expert and you can help polishing this paragraph. Return ONLY the polished version. Do not explain changes, do not give multiple options, and do not add commentary.
\end{tcolorbox}

\begin{tcolorbox}[colback=white,colframe=black!50,title=System prompt for the polish task]
    You are a professional writing expert and you can help expanding this paragraph. Return ONLY the expanded version. Do not explain, do not give multiple options, and do not add commentary.
\end{tcolorbox}

When calling the APIs of these LLMs, for simplicity, we did not set the temperature or perform top-$k$ or nucleus sampling. 

\section{Experiments: Details}
\subsection{Implementation Details}\label{sec:method-details}
\textbf{Fine-tuning setting of our method.} In our implementation, we initialize the $w$ function using \texttt{google/gemma-3-1b-pt}\footnote{\url{https://huggingface.co/google/gemma-3-1b-pt}}. The model is then fine-tuned with LoRA \citep{hu2022lora}, implemented via the \texttt{peft} library, where the rank parameter is set to 4, \texttt{lora\_alpha} to 16, and \texttt{lora\_dropout} to 0.05. All other parameters are kept at their default settings.

\textbf{Estimating the distribution of statistics under $\mathcal{H}_0$.}
For evaluation on external datasets, we estimate the distribution of the test statistic under $\mathcal{H}_0$ using human-written texts that were not used in fine-tuning. The corresponding histograms are presented in Figure~\ref{fig:null-distr}. From this figure, we observe that the empirical null distributions deviate from normality in most domains. This suggests that deriving a closed-form asymptotic distribution under $\mathcal{H}_0$ seems to be difficult, providing empirical justification for using the empirical distribution to estimate $p$-values.

\begin{figure}[!t]
    \centering
    \includegraphics[width=1.0\linewidth]{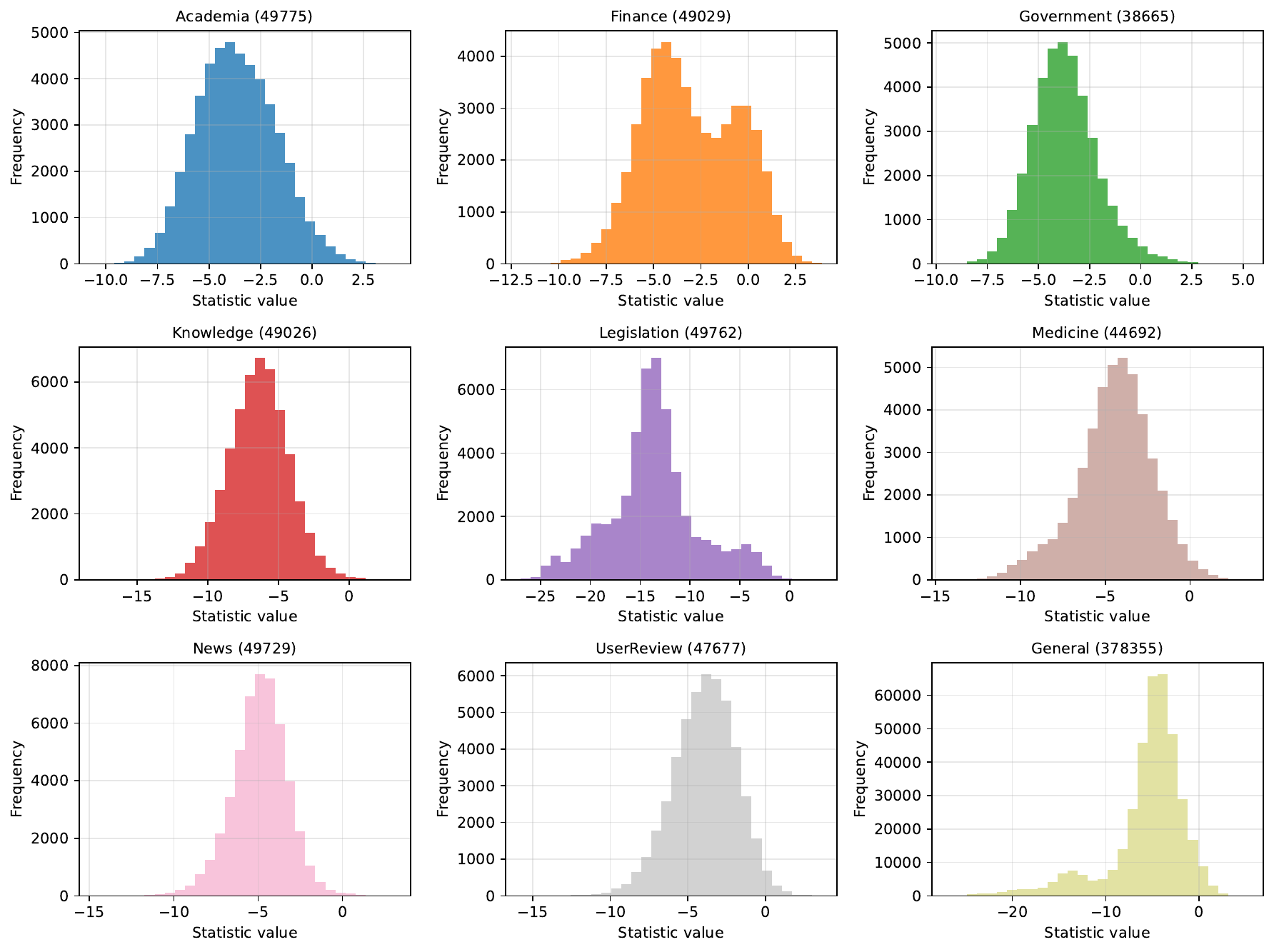}
    \caption{Empirical distributions of the test statistics for human-written texts.}
    \label{fig:null-distr}
\end{figure}

\textbf{Implementation of baselines}. For fairness, all training-free baselines use the same sampling model as our method, \texttt{google/gemma-3-1b-pt}. For methods that require a scoring model, we use \texttt{google/gemma-3-1b-it}\footnote{\url{https://huggingface.co/google/gemma-3-1b-it}}, an instruction-tuned version of \texttt{google/gemma-3-1b-pt}. For the training-based method ImBD, we use the same sampling model as ours and adopt the default LoRA settings provided in their paper. For RoBERTa Detector and RADAR, we directly use the checkpoints provided on Hugging Face, i.e., \texttt{openai-community/roberta-large-openai-detector}\footnote{\url{https://huggingface.co/openai-community/roberta-large-openai-detector}} and \texttt{TrustSafeAI/RADAR-Vicuna-7B}\footnote{\url{https://huggingface.co/TrustSafeAI/RADAR-Vicuna-7B}}.

\textbf{Computational device}. The fine-tuning procedures and experiments are conducted on an Ubuntu 22.04 platform equipped with a 16-vCPU AMD EPYC 9K84 96-Core Processor and 150 GB RAM. The system is attached to an NVIDIA H20-NVLink GPU with 96 GB VRAM, and all experiments are run using Python 3.10.

\subsection{Real-world application at Section~\ref{sec:website}}\label{sec:website-experiments}

\begin{tcolorbox}[colback=white,colframe=black!50,title=Prompt to GPT5]
    Please rewrite the abstract of one paper: ``Change point detection is becoming increasingly popular in many application areas. On one hand, most of the theoretically-justified methods are investigated in an ideal setting without model violations, or merely robust against identical heavy-tailed noise distribution across time and/or against isolate outliers; on the other hand, we are aware that there have been exponentially growing attacks from adversaries, who may pose systematic contamination on data to purposely create spurious change points or disguise true change points. In light of the timely need for a change point detection method that is robust against adversaries, we start with, arguably, the simplest univariate mean change point detection problem. The adversarial attacks are formulated through the Huber $\epsilon$-contamination framework, which in particular allows the contamination distributions to be different at each time point. In this paper, we demonstrate a phase transition phenomenon in change point detection. This detection boundary is a function of the contamination proportion $\epsilon$ and is the first time shown in the literature. In addition, we derive the minimax-rate optimal localisation error rate, quantifying the cost of accuracy in terms of the contamination proportion. We propose a computationally feasible method, matching the minimax lower bound under certain conditions, saving for logarithmic factors. Extensive numerical experiments are conducted with comparisons to robust change point detection methods in the existing literature.''
\end{tcolorbox}

\begin{tcolorbox}[colback=white,colframe=black!50,title=Text outputed by GPT5]
Here’s a polished and more natural rewrite of your abstract — preserving all the technical content but improving flow, clarity, and academic tone:

Abstract (Rewritten):
Change point detection has attracted growing attention across a wide range of applications. While many existing methods enjoy strong theoretical guarantees, they are typically developed under idealized assumptions—such as model correctness or homogeneous noise—and at best demonstrate robustness to isolated outliers or heavy-tailed but identical noise distributions. In contrast, modern data environments increasingly face adversarial contamination, where systematic perturbations are deliberately introduced to obscure true change points or to fabricate spurious ones.

Motivated by this challenge, we study the fundamental problem of univariate mean change point detection under adversarial contamination, modeled through the Huber $\epsilon$-contamination framework, which allows the contaminating distribution to vary arbitrarily over time. We reveal a phase transition phenomenon in the detectability of change points, characterized by a sharp detection boundary that depends on the contamination proportion $\epsilon$—established here for the first time. Furthermore, we derive the minimax-rate optimal localization error, quantifying how contamination degrades estimation accuracy. Building on these theoretical insights, we propose a computationally efficient algorithm that achieves the minimax lower bound (up to logarithmic factors) under mild conditions. Extensive numerical experiments demonstrate the superior robustness and accuracy of our method compared with existing approaches.

Would you like me to make it slightly shorter and punchier (for a NeurIPS/ICML-style abstract), or keep it formal and complete (for a statistics journal like Annals of Statistics or JASA)?
\end{tcolorbox}

\end{document}